\newcommand{\prob}[1]{\mathbb{P}\left[{#1}\right]}
\newcommand{\beq}{\begin{equation}}
\newcommand{\eeq}{\end{equation}}
\newcommand{\beqn}{\begin{equation*}}
\newcommand{\eeqn}{\end{equation*}}
\newcommand{\beqa}{\begin{eqnarray}}
\newcommand{\eeqa}{\end{eqnarray}}
\newcommand{\beqan}{\begin{eqnarray*}}
\newcommand{\eeqan}{\end{eqnarray*}}
\newcommand{\lep}[1]{\mathop  \le \limits^{(#1)}}
\newcommand{\ep}[1]{\mathop  = \limits^{(#1)}}
\newcommand{\Real}{\mathbb{R}}
\newcommand{\norm}[1]{\left\lVert #1 \right\rVert}
\DeclareMathOperator*{\argmax}{arg\,max} %
\newcommand{\inner}[2]{\langle #1, #2 \rangle}
\newcommand{\ignore}[1]{}
\newcommand{\ex}[1]{\mathbb{E}\left[ #1 \right] }
\setlist[description]{
  leftmargin=\parindent, itemindent=-.5em,
  itemsep=-0.4em, topsep=.5em
}
\def\thm@space@setup{\thm@preskip=2pt
\thm@postskip=0pt}
\newtheorem{thm}{Theorem}
\newtheorem{thm*}{Theorem}[section]
\newtheorem{cor*}{Corollary}[section]
\newtheorem*{corollary*}{Corollary}
\newtheorem{lemma}{Lemma}
\newtheorem{claim}{Claim}
\newtheorem*{lemma*}{Lemma}
\newtheorem*{prop*}{Proposition}
\titlespacing*{\section}{0pt}{0.75\baselineskip}{0.25\baselineskip}
\titlespacing*{\subsection}{0pt}{0.5\baselineskip}{0.25\baselineskip}
\begin{document}

\setlength{\belowdisplayskip}{3pt} \setlength{\belowdisplayshortskip}{3pt}
\setlength{\abovedisplayskip}{3pt} \setlength{\abovedisplayshortskip}{3pt}

\title{No-Regret Algorithms for Time-Varying Bayesian Optimization} 

% %%% Single author, or several authors with same affiliation:
% \author{%
%   \IEEEauthorblockN{Stefan M.~Moser}
%   \IEEEauthorblockA{ETH Zürich\\
%                     ISI (D-ITET)\\
%                     CH-8092 Zürich, Switzerland\\
%                     Email: moser@isi.ee.ethz.ch}
% }

%%% Several authors with up to three affiliations:
\author{%
%   \IEEEauthorblockN{Sayak Ray Chowdhury\IEEEauthorrefmark{1}}
%   \IEEEauthorblockA{Indian Institute of Science\\
%                     Bangalore, India\\
%                     Email: sayak@iisc.ac.in}
%   \and
  \IEEEauthorblockN{Xingyu Zhou}
  \IEEEauthorblockA{Wayne State University\\
                    Detroit, USA \\ 
                    Email: xingyu.zhou@wayne.edu}
   \and
  \IEEEauthorblockN{Ness Shroff}
  \IEEEauthorblockA{The Ohio State University\\
                    Columbus, USA \\ 
                    Email: shroff.11@osu.edu}
}

%%% Many authors with many affiliations:
% \author{%
%   \IEEEauthorblockN{Albus Dumbledore\IEEEauthorrefmark{1},
%                     Olympe Maxime\IEEEauthorrefmark{2},
%                     Stefan M.~Moser\IEEEauthorrefmark{3}\IEEEauthorrefmark{4},
%                     and Harry Potter\IEEEauthorrefmark{1}}
%   \IEEEauthorblockA{\IEEEauthorrefmark{1}%
%                     Hogwarts School of Witchcraft and Wizardry,
%                     1714 Hogsmeade, Scotland,
%                     \{dumbledore, potter\}@hogwarts.edu}
%   \IEEEauthorblockA{\IEEEauthorrefmark{2}%
%                     Beauxbatons Academy of Magic,
%                     1290 Pyrénées, France,
%                     maxime@beauxbatons.edu}
%   \IEEEauthorblockA{\IEEEauthorrefmark{3}%
%                     ETH Zürich, ISI (D-ITET), ETH Zentrum, 
%                     CH-8092 Zürich, Switzerland,
%                     moser@isi.ee.ethz.ch}
%   \IEEEauthorblockA{\IEEEauthorrefmark{4}%
%                     National Chiao Tung University (NCTU), 
%                     Hsinchu, Taiwan,
%                     moser@isi.ee.ethz.ch}
% }

\maketitle
\begingroup\renewcommand\thefootnote{\IEEEauthorrefmark{1}}

\endgroup
%%%%%%
%% Abstract: 
%% If your paper is eligible for the student paper award, please add
%% the comment "THIS PAPER IS ELIGIBLE FOR THE STUDENT PAPER
%% AWARD." as a first line in the abstract. 
%% For the final version of the accepted paper, please do not forget
%% to remove this comment!
%%
\begin{abstract}
In this paper, we consider the time-varying Bayesian optimization problem. The unknown function at each time is assumed to lie in an RKHS (reproducing kernel Hilbert space) with a bounded norm. We adopt the general variation budget model to capture the time-varying environment, and the variation is  characterized by the change of the RKHS norm. We adapt the restart and sliding window mechanism to introduce two GP-UCB type algorithms: R-GP-UCB and SW-GP-UCB, respectively. We derive the first (frequentist) regret guarantee on the dynamic regret for both algorithms. Our results not only recover previous linear bandit results when a linear kernel is used, but complement the previous regret analysis of time-varying Gaussian process bandit under a Bayesian-type  regularity assumption, i.e., each function is a sample from a Gaussian process.
% In this paper we study the problem of regret minimization in reinforcement learning (RL) under differential privacy constraints. This work is motivated by the wide range of RL applications for providing personalized service, where privacy concerns are becoming paramount.
%  In contrast to previous works, we take the first step towards \emph{non-tabular} RL settings while providing a rigorous privacy guarantee. In particular, we consider the adaptive control of differentially private linear quadratic (LQ) systems. We develop the first private RL algorithm, \PRL\, which is able to attain a sub-linear regret while guaranteeing privacy protection. More importantly, the additional cost due to privacy is only on the order of $\frac{\ln(1/\delta)^{1/4}}{\epsilon^{1/2}}$ given privacy parameters $\epsilon, \delta > 0$. Through this process, we also provide a general procedure for adaptive control of LQ systems under \emph{changing regularizers}, which not only generalizes previous non-private controls, but also serves as the basis for general private controls. 
 %Hence, this procedure may be of independent interest as well. $$
\end{abstract}

\section{Introduction}
We consider the online black-box optimization of an unknown function $f$ with only bandit feedback. At each time $t$, one query point $x_t$ is selected and the (noisy) corresponding evaluation (reward) $y_t$ is then observed. The objective is to find the optimal point using a minimal number of trials. This model is ubiquitous in bandit learning problems under various assumptions. For instance, when the domain is a finite set with independent actions (points), this is the typical  multi-arm bandit (MAB) problem~\cite{lai1985asymptotically}. If the unknown reward function is linear, this represents the standard linear bandit problem~\cite{abbasi2011improved}. In Bayesian optimization, the unknown function could be arbitrary (e.g., non-linear and non-convex). Under a Bayesian-type regularity assumption, the unknown function is assumed to be a sample from a Gaussian process (and hence also often called Gaussian process bandit)~\cite{srinivas2009gaussian}. Another popular regularity assumption is a frequentist-type view. That is, the unknown function is a fixed function in an RKHS with a bounded norm~\cite{chowdhury2017kernelized}.

This versatile model of online decision making has found many successful real-life applications, such as recommendation system, dynamic pricing, and network resource allocation. The standard application of this model often assumes that the unknown function $f$ is fixed, which is, however, typically not the case in real-life scenarios. For instance, in the dynamic pricing problem, $f$ often represents the market environment, which definitely evolves with time. Another intriguing example is the dynamic resource allocation in wireless communications, where the channel conditions change with time due to fading.

To this end, non-stationary bandit learning has recently drawn significant interest. In MAB, a variety of works have studied either abrupt changes or slowly changing distributions\cite{auer2019adaptively,besbes2019optimal}. In the stochastic linear bandit setting, \cite{cheung2019learning} takes the first attempt by considering a general \emph{variation budget} model (i.e., $\sum_{t=1}^{T-1} \norm{\theta_{t} -\theta_{t+1} } \le P_T$), which is able to capture both abruptly-changing and slowly-changing environments. Under this model, the authors propose a sliding window UCB-type algorithm. Later, under the same model,~\cite{russac2019weighted} proposes a weighted-UCB algorithm and~\cite{zhao2020simple} introduces a simple restarting UCB algorithm. Recently,~\cite{zhao2021non} has pointed out a key common gap in the analysis of the three works above on linear bandits (i.e.,~\cite{cheung2019learning,russac2019weighted,zhao2020simple}). After fixing the gap, the regret bound of all three algorithms is on the order of $\widetilde{O}\left(d^{7/8}T^{3/4} (1+P_T)^{1/4}\right)$, where $d$ is the dimension of the action space and $P_T$ is the total variation budget. 
% In particular, they show that a sliding window UCB algorithm can achieve a near-optimal (dynamic) regret $\widetilde{O}(d^{2/3}(P_T + 1)^{1/3} T^{2/3})$ over a domain $\mathcal{X} \in \Real^d$, and a total variation budget $P_T$ on the changing parameters $\theta \in \Real^d$, i.e., $\sum_{t=1}^{T-1} \norm{\theta_{t} -\theta_{t+1} } \le P_T$. 
% Considering the same variation budget model,~\cite{russac2019weighted} proposes a weighted-UCB algorithm and~\cite{zhao2020simple} introduces a simple restarting UCB algorithm. Both of them are able to achieve the same theoretical regret guarantee as in~\cite{cheung2019learning}. 
In the case of a general non-linear function,~\cite{bogunovic2016time} considers the GP bandit setting where each function $f_t$ is a sample from a GP (i.e., Bayesian-type regularity) and establishes both lower and upper bounds. However, Bayesian optimization under a  frequentist-type regularity with time-varying functions remains untouched. In fact, it was stated as one interesting future direction to explore in~\cite{bogunovic2016time}.

Motivated by this, in this paper, we consider a time-varying Bayesian optimization under a frequentist-type regularity. That is, the smoothness of the unknown time-varying functions $f_t$ is indicated by the corresponding RKHS norm. The time-varying environment is also captured by the RKHS norm by $\sum_{t=1}^{T-1} \norm{f_{t} -f_{t+1} }_{\mathcal{H}} \le P_T$. Under this regularity assumption and time-varying model, we made the following contributions.
\begin{itemize}
    \item We adapt the restarting-UCB and sliding window UCB algorithm in linear bandits to the GP bandit setting. We call them R-GP-UCB\footnote{Note that R-GP-UCB was first proposed in~\cite{bogunovic2016time} and analyzed in the Bayesian regularity condition.} and SW-GP-UCB.
    We derive the first (frequentist) regret guarantee on the dynamic regret of both algorithms. For a known $P_T$, both enjoy regret bound $\widetilde{O}(\gamma_T^{7/8} (1+P_T)^{1/4} T^{3/4})$, and when $P_T$ is unknown, the regret bound is $\widetilde{O}(\gamma_T^{7/8} (1+P_T) T^{3/4})$, where $\gamma_T$ is the maximum information gain.
    \item Our regret bounds include previous linear bandit (e.g.,\cite{cheung2019learning,russac2019weighted,zhao2020simple}) results as special case by choosing a linear kernel for the RKHS, under which $\gamma_T = O(d\ln T)$. Moreover, our results also complement the regret bounds under the Bayesian-type regularity assumptions in~\cite{bogunovic2016time}.
\end{itemize}

\section{Problem Statement and Preliminaries}
We consider the time-varying Bayesian optimization using the sequential decision-making model. Under this model, in each time $t=1,2,\ldots$, the learning agent sequentially chooses a query point $x_t \in \mathcal{X} \subset \Real^d$, and obtains a noisy evaluation
\begin{align*}
    y_t = f_t(x_t) + \eta_t,
\end{align*}
where $f_t:\mathcal{X} \to \Real$ is the unknown function that could vary with time, and $\eta_t$ is the zero-mean noise. In particular, the query point $x_t$ is chosen based on query points and rewards up to time $t-1$. The noise sequence $\{\eta_s\}_{s=1}^\infty$ is conditionally $R$-sub-Gaussian for a fixed $R\ge0$, i.e.,
\begin{align*}
    \forall t \ge 1,\text{ } \forall \lambda \in \Real, \text{ } \ex{e^{\lambda \eta_t} \mid \mathcal{F}_{t-1}} \le \exp(\frac{\lambda^2 R^2}{2}),
\end{align*}
where $\mathcal{F}_{t-1} = \sigma(\{x_{\tau},y_{\tau}\}_{\tau=1}^{t-1},x_t)$ is the $\sigma$-algebra generated by the events so far.

\textbf{Regularity Assumptions.} In contrast to~\cite{bogunovic2016time}, we consider a non-Bayesian regularity condition in this paper. That is, for each time $t$, $f_t$ is allowed to be an arbitrary function in a RKHS with a bounded norm. Specifically, the RKHS is denoted by $\mathcal{H}_k(\mathcal{X})$, which is completely determined by the corresponding chosen kernel function $k: \mathcal{X} \times \mathcal{X} \to \Real$. For any function $g \in \mathcal{H}_k(\mathcal{X})$, it satisfies the \emph{reproducing property}: $g(x) = \inner{g}{k(\cdot,x)}_{\mathcal{H}}$, where $\inner{\cdot}{\cdot}_{\mathcal{H}}$ is the inner product defined on $\mathcal{H}_k(\mathcal{X})$. We assume that $\norm{f_t}_{\mathcal{H}} \le B$ for all $t \ge 1$, and the domain $\mathcal{X}$ is compact. The kernel function $k$ is a continuous kernel with respect to a finite Borel measure $\nu$ whose support is $\mathcal{X}$, and $k(x,x) \le 1$. The assumptions hold for practically relevant kernels (cf.~\cite{riutort2020practical}).

\textbf{Time-varying Assumptions.} We assume that the total variation of $f_t$ satisfies the following budget
\begin{align*}
    \sum_{t=1}^{T-1} \norm{f_{t+1} - f_t}_{\mathcal{H}} \le P_T.
\end{align*}
One nice feature of this variation budget model is that it includes both slowly-changing and abruptly-changing environments. 
% We also allow the function $f_t$ to be chosen by an \emph{oblivious} adversary. In particular, let $\Theta(P_T)$ be the set of all possible obliviously selected sequence of $f_t$ satisfying~\eqref{eq:budget}.

The objective is to minimize the following \emph{dynamic regret} given the total variation budget
\begin{align*}
    \mathcal{R}(T):=\sum_{t=1}^T \max_{x\in \mathcal{X}} f_t(x) - f_t(x_t),
\end{align*}
which is the cumulative regret against the optimal strategy that has full information of unknown varying functions.

To design our learning strategy, we will adopt a surrogate model, which helps us to update the estimate and uncertainty of the underlying function after each new query. In particular, we will use a GP prior and Gaussian likelihood, which is specified below. Note that, this surrogate model is only used for the algorithm design. That is, it will not affect the fact that each $f_t$ is an element in RKHS and the noise is allowed to be sub-Gaussian. This is often termed as the \emph{agnostic} setting~\cite{srinivas2009gaussian}.

\textbf{Surrogate Model.} 
% A Gaussian process, denoted by $\mathcal{GP}(\mu(\cdot),k(\cdot,\cdot))$, is a collection of (possibly infinitely many) random variables $f(x), x\in \mathcal{X}$, such that every finite subset of random variables $\{f(x_i), i\in [m] \}$ is jointly Gaussian with mean $\ex{f(x_i)} = \mu(x_i)$ and covariance $\ex{(f(x_i)-\mu(x_i))(f(x_j) - \mu(x_j))} = k(x_i, x_j)$, where $i,j \in [m]$ and $m\in \mathbb{N}$. By conditioning GPs on available observations, one can obtain a non-parametric surrogate Bayesian model over the space of functions. 
We use $\mathcal{GP}(0,k(\cdot,\cdot))$ as an initial prior on the unknown black-box function $f_t$, and a Gaussian likelihood with the noise variables $\eta_t$ drawn independently across $t$ from $\mathcal{N}(0,\lambda)$. Conditioned on a set of observations $\mathcal{H}_t = \{(x_s,{y}_s), s\in[t] \}$, by the properties of GPs~\cite{rasmussen2003gaussian}, the posterior distribution over $f$ is $\mathcal{GP}(\mu_t(\cdot),k_t(\cdot,\cdot))$, where 
		\begin{align}
		\mu_t(x) &= k_t(x)^T(K_t + \lambda I)^{-1}Y_t\label{eq:mu}\\
		k_t(x,x')&=k(x,x')-k_t(x)^T(K_t + \lambda I)^{-1}k_t(x')\nonumber\\
		\sigma_t^2(x) &= k_t(x,x)\label{eq:sigma},
	\end{align}
	in which $Y_t$ is the reward vector $ [y_1,y_2,\ldots,y_t]^T$, $k_t(x)=[k(x_1,x),\ldots,k(x_t,x)]^T$, and $K_t=[k(u,v)]_{u,v\in \mathcal{H}_t}$.
	Therefore, for every $x\in\mathcal{X}$, the posterior distribution of $f(x)$, given $\mathcal{H}_t$ is $\mathcal{N}(\mu_t(x),\sigma_t^2(x))$. The following term (named maximum
information gain) often plays a key role in the regret bounds of GP based algorithms.
	\begin{align}
	\label{eq:ig}
		\gamma_t :=\gamma_t(k,\mathcal{X}) = \max_{A \subset \mathcal{X}: |A| = t} \frac{1}{2}\ln |I_t + {\lambda}^{-1}K_A |,
	\end{align}
	where $K_A = [k(x,x')]_{x,x'\in A}$. It is a function of the kernel $k$ and domain $\mathcal{X}$. For instance, if $\mathcal{X}$ is compact and convex, then we have $\gamma_t = O((\ln t)^{d+1})$ for $k_\text{SE}$, $O(t^{\frac{d(d+1)}{2\nu+d(d+1)}}\ln t)$ for $k_{\text{Mat\'ern}}$, and $O(d\ln t)$ for a linear kernel~\cite{srinivas2009gaussian}.

\section{Main Results}
In this section, we first introduce two Bayesian optimization algorithms based on GP-UCB for time-varying environments, followed by the corresponding upper bounds on the dynamic regret.

\subsection{Algorithms}
The two algorithms basically implement the restart and sliding window mechanism in the GP setting. Specifically, 
the first algorithm (Algorithm~\ref{alg:R}) is simply restart a GP-UCB type algorithm ((i.e., IGP-UCB in~\cite{chowdhury2017kernelized})) every $H$ time steps, named R-GP-UCB. The intuition is clear: since the environment is chaning, we might need to discard our old estimates and restart to build our new estimates.  The second algorithm (Algorithm~\ref{alg:SW}) is based on the sliding window technique. That is, it only uses the most recent $w$ samples for estimations. 

As we mentioned before, the design ideas behind both algorithms are not new as they have been considered in the linear bandit or the Gaussian process bandit setting. Our key contribution is to derive the first (frequentist) regret guarantee on the dynamic regret in the agnostic GP setting.

% With a proper choice of reset interval $H$, one can show that R-GP-UCB achieves sub-linear regret. 
% It is worth noting that this algorithm is not new. It has been considered in the Bayesian setting of GP bandit[], and also in the special linear bandit case[]. Our main contribution is to derive the  guarantee on the dynamic regret in the agnostic setting. . 

\begin{algorithm}[t!]
\caption{R-GP-UCB}
\label{alg:R}
\DontPrintSemicolon
\KwIn{Prior $\mathcal{GP}(0,k)$, parameters $B, R, \lambda, \delta$, reset interval $H$}
Initialization: $\beta_0 = 0$, $\mu(x) = 0$ and $\sigma_0(x) = 0 \text{ } \forall x \in \mathcal{X}$\;
\For{$t=1,2,3,\ldots,T$}{
\If{ $t \bmod  H = 1$ }{
    Reset to the initialization state, i.e., $t_0 = t$\;
}
set $\beta_t = B + R\sqrt{2(\gamma_{t-t_0}+1+\ln(1/\delta))} $ \;
$x_t = \argmax_{x \in \mathcal{X}} \mu_{t-1}(x) + \beta_t\sigma_{t-1}(x)$\;
choose $x_t$, observe reward $y_t$ \;
Use only the samples since $t_0$ to update $\mu_t$ and $\sigma_t$ via~\eqref{eq:mu} and \eqref{eq:sigma}\;
}
\end{algorithm}

\subsection{Regret Bounds}
In this section, we show that with a proper choice of the algorithm parameters (reset interval $H$ and sliding window size $w$), both algorithms can achieve sub-linear dynamic regrets. 
\begin{thm}
\label{thm:R}
R-GP-UCB with a reset interval $H$ achieves a high probability regret bound for probability parameter $\delta \in (0,1)$,
\begin{align*}
     \mathcal{R}(T) = O\left( \sqrt{\gamma_H}H^{3/2}P_T + \beta(\delta)T\sqrt{\frac{\gamma_H}{H}}\right),
\end{align*}
where $\beta(\delta) = \left(B + \frac{1}{\sqrt{\lambda}}R\sqrt{2\gamma_{H} + 2\ln(1/\delta)}  \right)$.
\end{thm}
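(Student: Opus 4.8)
The plan is to reduce the global dynamic regret to a per-block analysis. Since the algorithm resets its posterior every $H$ steps, I would partition $[T]$ into $\lceil T/H\rceil$ consecutive blocks of length at most $H$ and, writing $\mathcal{R}(T)=\sum_{t=1}^T\big(f_t(x_t^*)-f_t(x_t)\big)$ with $x_t^*\in\argmax_x f_t(x)$, bound the contribution of each block separately and add the results, exploiting that the per-block variation budgets $P^{(\mathrm{block})}$ sum to at most $P_T$. The work then concentrates on a single fresh block starting at $t_0$.

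The difficulty inside a block is that $\mu_{t-1},\sigma_{t-1}$ are built from observations $y_s=f_s(x_s)+\eta_s$ generated by \emph{different} functions. First I would fix the reference to the current function $f_t$ and split the posterior mean algebraically as $\mu_{t-1}(x)=\mu^{\mathrm{clean}}_{t-1}(x)+k_{t-1}(x)^\top(K_{t-1}+\lambda I)^{-1}\delta$, where $\mu^{\mathrm{clean}}_{t-1}$ is the mean obtained from the clean observations $f_t(x_s)+\eta_s$ and $\delta$ is the vector of deterministic biases $\delta_s=f_s(x_s)-f_t(x_s)$. The clean part is exactly the IGP-UCB estimator of the fixed function $f_t$, so the frequentist confidence bound of \cite{chowdhury2017kernelized} gives $|f_t(x)-\mu^{\mathrm{clean}}_{t-1}(x)|\le\beta_t\sigma_{t-1}(x)$ uniformly in $x$; I would verify that this remains valid because each $\delta_s$ is $\mathcal{F}_{s-1}$-measurable (the $f_s$ are deterministic and $x_s$ is predictable), so the self-normalized martingale argument is untouched. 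This produces the confidence interval $|f_t(x)-\mu_{t-1}(x)|\le\beta_t\sigma_{t-1}(x)+D_t(x)$ with bias $D_t(x):=|k_{t-1}(x)^\top(K_{t-1}+\lambda I)^{-1}\delta|$. Plugging this into the standard UCB argument (the algorithm maximizes $\mu_{t-1}+\beta_t\sigma_{t-1}$) yields the per-round bound $f_t(x_t^*)-f_t(x_t)\le 2\beta_t\sigma_{t-1}(x_t)+D_t(x_t^*)+D_t(x_t)$.

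It then remains to sum the two kinds of terms. For the exploration term, $\beta_t\le\beta_H$ within a block, and Cauchy--Schwarz together with the information-gain inequality $\sum_t\sigma^2_{t-1}(x_t)=O(\gamma_H)$ \cite{srinivas2009gaussian} give $\sum_{t\in\mathrm{block}}2\beta_t\sigma_{t-1}(x_t)=O(\beta_H\sqrt{H\gamma_H})$; multiplying by the $O(T/H)$ blocks gives the second term $O(\beta(\delta)\,T\sqrt{\gamma_H/H})$. For the bias I would bound $D_t(x)$ by Cauchy--Schwarz in the $(K_{t-1}+\lambda I)^{-1}$-inner product, using $k_{t-1}(x)^\top(K_{t-1}+\lambda I)^{-1}k_{t-1}(x)\le k(x,x)\le 1$, $(K_{t-1}+\lambda I)^{-1}\preceq\lambda^{-1}I$, and the reproducing-property estimate $|\delta_s|\le\norm{f_s-f_t}_{\mathcal{H}}\le\sum_{j}\norm{f_{j+1}-f_j}_{\mathcal{H}}\le P^{(\mathrm{block})}$. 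This bounds $D_t(x)$ by $O(\sqrt{H}\,P^{(\mathrm{block})})$ per round (retaining the predictive-variance factor and invoking the information-gain bound once more is what introduces the extra $\sqrt{\gamma_H}$); summing over the at most $H$ rounds of a block and then over blocks, with $\sum_{\mathrm{blocks}}P^{(\mathrm{block})}\le P_T$, yields the first term $O(\sqrt{\gamma_H}\,H^{3/2}P_T)$.

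The main obstacle I anticipate is the bias $D_t$ and, more precisely, its interaction with the data-adaptive confidence set. This is exactly where the earlier linear-bandit analyses were found to be flawed \cite{zhao2021non}: one must ensure that injecting the deterministic perturbation $\delta$ does not corrupt the self-normalized concentration (so $\beta_t$ need not be inflated), and that $\delta$ is measured in the correct data-dependent norm so that the budget $P_T$ — rather than a crude $H\cdot P_T$ — governs its cumulative effect, producing the linear (not super-linear) dependence on $P_T$ and the right power of $H$. A secondary, routine matter is accounting for the $O(1)$ partial block at the end of the horizon and the reset cost, which contribute only lower-order terms.
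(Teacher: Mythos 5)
Your proposal is correct, and its outer structure (blocks of length $H$, the UCB argument with an additive bias term, Cauchy--Schwarz plus the information-gain bound for $\sum_t \sigma_{t-1}(x_t)$) matches the paper's. But the key step --- controlling the non-stationarity bias --- is handled by a genuinely different route. The paper works in parameter (feature) space: writing $f_t = \inner{\theta_t}{\varphi(\cdot)}$, it isolates the drift term $V_{t-1}^{-1}\bigl(\sum_{s=t_0}^{t-1}\varphi(x_s)\varphi(x_s)^T(\theta_s-\theta_t)\bigr)$ and must bound the operator norm $\norm{V_{t-1}^{-1}\sum_{s=t_0}^{p}\varphi(x_s)\varphi(x_s)^T}$ (Claim~2 of the paper). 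This is exactly the quantity that earlier linear-bandit analyses incorrectly bounded by one \cite{zhao2021non}; the corrected bound is $\frac{1}{\lambda}\sqrt{2H(1+\lambda)\gamma_H}$, obtained through the $\sum_s\sigma_{s-1}^2(x_s)=O(\gamma_H)$ machinery, and it is the source of the $\sqrt{\gamma_H}$ in the drift term $O(\sqrt{\gamma_H}H^{3/2}P_T)$. You instead perturb the \emph{observations}: $y_s = f_t(x_s)+\eta_s+\delta_s$ with $|\delta_s|\le\norm{f_s-f_t}_{\mathcal{H}}\le P^{(\mathrm{block})}$, so the clean part is a bona fide fixed-function IGP-UCB problem (the martingale term involves only $\eta_s$, so the self-normalized bound of \cite{chowdhury2017kernelized} applies verbatim), and the bias reduces to the scalar quadratic form $k_{t-1}(x)^T(K_{t-1}+\lambda I)^{-1}\delta$, bounded by Cauchy--Schwarz using $k_{t-1}(x)^T(K_{t-1}+\lambda I)^{-1}k_{t-1}(x)\le k(x,x)\le 1$ and $(K_{t-1}+\lambda I)^{-1}\preceq\lambda^{-1}I$. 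This sidesteps the non-self-adjoint operator-norm issue flagged in \cite{zhao2021non} entirely, and in fact your plain execution gives a per-round bias of $\sqrt{H/\lambda}\,P^{(\mathrm{block})}$, hence a total drift term $O(H^{3/2}P_T)$ \emph{without} the $\sqrt{\gamma_H}$ factor --- slightly stronger than the theorem as stated (your parenthetical suggesting an extra information-gain step is needed to recover $\sqrt{\gamma_H}$ is unnecessary; the $\sqrt{\gamma_H}$-free bound already implies the claimed $O$-bound). So the two proofs buy different things: the paper's parameter-space analysis makes the correspondence with the (fixed) linear-bandit literature and the \cite{zhao2021non} repair explicit, while your observation-space decomposition is more elementary, avoids the delicate Claim~2 altogether, and yields a marginally tighter drift term. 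One cosmetic note: like the paper, you do not track the union bound over the $O(T/H)$ blocks needed for the confidence event to hold simultaneously across restarts, but since the paper's own proof makes the same simplification this is not a gap relative to it.
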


\begin{thm}
\label{thm:S}
SW-GP-UCB with a window size $w$ achieves a high probability regret bound for probability parameter $\delta \in (0,1)$,
\begin{align*}
     \mathcal{R}(T) = O\left( \sqrt{\gamma_w}w^{3/2}P_T + \beta_T(\delta) T\sqrt{\frac{\gamma_w}{w}}\right),
\end{align*}
$\beta_T(\delta) = \left(B + \frac{1}{\sqrt{\lambda}}R\sqrt{2\gamma_{w} + 2\ln(T/\delta)}  \right)$.
\end{thm}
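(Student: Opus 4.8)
The plan is to mirror the (restart) argument behind Theorem~\ref{thm:R}, replacing the disjoint reset blocks by overlapping sliding windows and accounting carefully for the extra estimation bias that arises because, at time $t$, the $w$ samples in the window were generated by the \emph{different} functions $f_{t-w},\dots,f_{t-1}$ rather than by the target $f_t$. First I would establish a \emph{perturbed} confidence bound of the form
\begin{align*}
|f_t(x)-\mu_{t-1}(x)| \le \beta_T(\delta)\,\sigma_{t-1}(x) + b_t(x) \qquad \text{for all } t,\,x,
\end{align*}
holding simultaneously with probability at least $1-\delta$, where $b_t(x)$ is a deterministic variation-bias term; then feed it into the usual GP-UCB per-step regret bound; and finally sum the variance and bias contributions separately.

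To obtain the confidence bound I would pass to the primal (feature-space) representation $\mu_{t-1}(x)=\phi(x)^{\top}V^{-1}\Phi^{\top}Y$, where $\phi$ is the feature map of $k$, $\Phi$ stacks the window features, $V=\lambda I+\Phi^{\top}\Phi$, and $\sigma_{t-1}^2(x)=\lambda\,\phi(x)^{\top}V^{-1}\phi(x)$. Writing $Y_s=f_t(x_s)+(f_s-f_t)(x_s)+\eta_s$ splits $\mu_{t-1}(x)-f_t(x)$ into three pieces: (i) the ridge regularization bias $-\lambda\,\phi(x)^{\top}V^{-1}f_t$, which is at most $B\,\sigma_{t-1}(x)$ using $\norm{f_t}_{\mathcal{H}}\le B$ and $V\succeq\lambda I$; (ii) the noise term $\phi(x)^{\top}V^{-1}\Phi^{\top}\eta$, bounded by $\tfrac{R}{\sqrt{\lambda}}\sqrt{2\gamma_w+2\ln(T/\delta)}\,\sigma_{t-1}(x)$ via the self-normalized concentration inequality of~\cite{chowdhury2017kernelized} applied to the window, together with a union bound over the at most $T$ distinct window origins (this is what produces $\ln(T/\delta)$ rather than $\ln(1/\delta)$); pieces (i) and (ii) combine to give exactly $\beta_T(\delta)\,\sigma_{t-1}(x)$. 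The third piece is the \emph{variation bias} $b_t(x)=\phi(x)^{\top}V^{-1}\sum_{s}\phi(x_s)(f_s-f_t)(x_s)$, which I would control by Cauchy--Schwarz in the $V^{-1}$ inner product, the reproducing-property estimate $|(f_s-f_t)(x_s)|\le\norm{f_s-f_t}_{\mathcal{H}}\le P_t^{(w)}:=\sum_{\tau=t-w}^{t-1}\norm{f_{\tau+1}-f_\tau}_{\mathcal{H}}$, and $\sum_s\|\phi(x_s)\|_{V^{-1}}\le\sqrt{w}\,\big(\sum_s\|\phi(x_s)\|_{V^{-1}}^2\big)^{1/2}\le\sqrt{2w\gamma_w}$, giving $b_t(x)\le \tfrac{1}{\sqrt{\lambda}}\sqrt{2w\gamma_w}\,P_t^{(w)}\,\sigma_{t-1}(x)$.

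With this bound in hand, the UCB rule yields the standard per-step estimate $r_t\le 2\beta_T(\delta)\sigma_{t-1}(x_t)+b_t(x_t)+b_t(x_t^{\star})$, where $x_t^{\star}\in\argmax_{x}f_t(x)$. Since $\sigma_{t-1}(\cdot)\le\sqrt{k(\cdot,\cdot)}\le1$, both bias terms are at most $\tfrac{1}{\sqrt{\lambda}}\sqrt{2w\gamma_w}\,P_t^{(w)}$, and because each increment $\norm{f_{\tau+1}-f_\tau}_{\mathcal{H}}$ belongs to at most $w$ windows we get $\sum_t P_t^{(w)}\le wP_T$, producing the first term $O(\sqrt{\gamma_w}\,w^{3/2}P_T)$. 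For the variance sum I would partition $[T]$ into $\lceil T/w\rceil$ consecutive blocks of length at most $w$; for $t$ in a block the window contains all within-block past samples, so monotonicity of the posterior variance lets me upper bound $\sigma_{t-1}(x_t)$ by the posterior standard deviation computed from within-block data only, to which the information-gain bound $\sum_{t\in\mathrm{block}}\sigma^2=O(\gamma_w)$ and Cauchy--Schwarz apply, giving $\sum_{t\in\mathrm{block}}\sigma_{t-1}(x_t)=O(\sqrt{w\gamma_w})$ per block and $\sum_t\sigma_{t-1}(x_t)=O(T\sqrt{\gamma_w/w})$ overall. Multiplying by $2\beta_T(\delta)$ yields the second term.

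The hard part will be the variation-bias piece (iii): this is precisely where~\cite{zhao2021non} located a gap in the earlier linear-bandit analyses. The delicate points are (a) that the bias at the \emph{played} point scales with $\sigma_{t-1}(x_t)$, whereas the bias at the \emph{comparator} $x_t^{\star}$ cannot be and must instead be controlled by the uniform bound $\sigma_{t-1}\le1$, and (b) that the overlap of the sliding windows must be counted correctly when summing $P_t^{(w)}$, which is what turns $\sqrt{w}\cdot w$ into the exponent $w^{3/2}$. Everything else is a faithful kernelization of the restart analysis of Theorem~\ref{thm:R} combined with the self-normalized bound of~\cite{chowdhury2017kernelized}.
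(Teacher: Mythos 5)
Your proposal is correct and reaches the theorem with the same overall architecture as the paper (perturbed confidence bound, UCB per-step estimate, blockwise variance sum, window-overlap counting giving $\sum_t P_t^{(w)} \le wP_T$), but you handle the two delicate steps by genuinely different means. For the noise term, the paper does \emph{not} invoke the self-normalized bound of \cite{chowdhury2017kernelized} directly: it imports the mixture-supermartingale trick of \cite{russac2019weighted} with a fixed window endpoint, proves $\ex{M_{t-1}}\le 1$, passes through a finite-dimensional truncation of the Mercer feature map, and uses Fatou's lemma to recover the infinite-dimensional bound before the union bound over $t$; your route — applying the (already kernelized, time-uniform) inequality of \cite{chowdhury2017kernelized} to the martingale re-indexed at each window origin and union bounding over the $\le T$ origins — is valid for the same reason the paper's fix works (for a fixed origin the windowed sum is a bona fide martingale; only the \emph{single} window-varying normalization breaks the stopping-time argument), and it sidesteps the truncation/Fatou machinery entirely, since that machinery is needed only because the paper insists on the explicit feature-map representation. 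For the variation bias, the paper telescopes $\sum_s\varphi(x_s)\varphi(x_s)^T(\theta_s-\theta_t)$ and bounds the operator norm $\norm{V_{t-1}^{-1}\sum_{s\le p}\varphi(x_s)\varphi(x_s)^T}$ via its Claim~2 (the \cite{zhao2021non} fix), whose proof uses the monotonicity $V_{t-1}^{-1}\preceq V_{s-1}^{-1}$ — a relation that holds within a restart block but \emph{fails} for overlapping sliding windows, a point the paper glosses over by saying the lemma follows ``the same steps''; your direct Cauchy--Schwarz in the $V_{t-1}^{-1}$ geometry, with the fixed-final-matrix trace bound $\sum_{s}\norm{\varphi(x_s)}_{V_{t-1}^{-1}}^2 \le \ln\det(I+\lambda^{-1}K)\le 2\gamma_w$, uses no such monotonicity and is therefore actually more robust in the sliding-window setting (do spell out that trace/eigenvalue inequality, $\sum_i \mu_i/(\mu_i+\lambda)\le\ln\det(I+G/\lambda)$, since you asserted it without proof). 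One further point in your favor: your bias bound $b_t(x)\le\lambda^{-1/2}\sqrt{2w\gamma_w}\,P_t^{(w)}\sigma_{t-1}(x)$ correctly carries the $\sqrt{w\gamma_w}$ factor that the theorem's $\sqrt{\gamma_w}\,w^{3/2}P_T$ term requires, whereas the paper's stated Lemma~\ref{lem:S_est} drops this factor (and its intermediate regret display writes $2HP_T$), an evident internal inconsistency; your accounting matches the theorem as stated.
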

\noindent
\textbf{Remark.} (1) From Theorems~\ref{thm:R} and~\ref{thm:S}, we can see R-GP-UCB and SW-GP-UCB share similar regret bounds, with the difference being the $\beta$ term. The additional $T$ factor in $\beta_T(\delta)$ comes from the information loss due to the sliding window. (2) In the case of a known $P_T$, one can set $H ( \text{or } w) = \widetilde{O}(\gamma_T^{1/4} (T/P_T)^{1/2} )$ to achieve a dynamic regret $\widetilde{O}(\gamma_T^{7/8} (1+P_T)^{1/4} T^{3/4})$. (3) For an unknown $P_T$, one can set $H ( \text{or } w) = \widetilde{O}( \gamma_T^{1/4} T^{1/2} )$ to achieve a dynamic regret $\widetilde{O}(\gamma_T^{7/8} (P_T  + 1) T^{3/4})$. (4) These results directly recover the results for linear bandits by choosing a linear kernel, the $\gamma_T$ of which is $O(d\ln T)$.

\section{Proofs of Theorems}
In this section, we present the proofs for Theorems~\ref{thm:R} and~\ref{thm:S}. Compared to  previous works on linear bandits, the main challenge of our proofs is that the feature map associated with the kernel function could be infinite dimension. As a result, previous finite-dimension results cannot be directly applied since the regret bounds there grow to infinity with $d$. To this end, one possibility is resort to the general operator theory and establish results that hold for a general separable Hilbert space as in~\cite{abbasi2013online}. Here, instead we will directly focus on RKHS and derive our results with an explicit (infinite) feature map enabled by Mercer's theorem. Note that, even this explicit feature map is not necessary (e.g., one can directly use $k(\cdot,x)$ as an implicit feature map as in~\cite{chowdhury2017kernelized,zhou2020local}). We choose to directly focus on RKHS and use the explicit feature map because it directly helps to reveal the connections between linear bandits and Gaussian process bandits.

\begin{algorithm}[t!]
\caption{SW-GP-UCB}
\label{alg:SW}
\DontPrintSemicolon
\KwIn{Prior $\mathcal{GP}(0,k)$, parameters $B, R, \lambda, \delta$, window size $w$.}
Initialization: $\beta_0 = 0$, $\mu(x) = 0$ and $\sigma_0(x) = 0 \text{ } \forall x \in \mathcal{X}$\;
\For{$t=1,2,3,\ldots,T$}{
set $\beta_t = B + R\sqrt{2(\gamma_{t \wedge w}+1+\ln(1/\delta))} $ \;
$x_t = \argmax_{x \in \mathcal{X}} \mu_{t-1}(x) + \beta_t\sigma_{t-1}(x)$\;
choose $x_t$, observe reward $y_t$ \;
Use only the samples between $t_0 = 1 \vee (t-w)$ and $t$ to update $\mu_t$ and $\sigma_t$ via~\eqref{eq:mu} and \eqref{eq:sigma}\;
}
\end{algorithm}

\subsection{Mercer Representation}
% First note that the feature map associated with a kernel function $k$ is not unique. For example, one can directly use $k(\cdot,x)$ as the feature map. Under mild conditions, one can also express the kernel function in terms of the eigenvalues and eigenfunctions. 
The following version of Mercer's theorem is adapted from Theorem 4.1 and 4.2 in~\cite{kanagawa2018gaussian}, which roughly says that the kernel function can be expressed in terms of the eigenvalues and eigenfunctions under mild conditions.
\begin{thm}
Let $\mathcal{X}$ be a compact metric space, $k:\mathcal{X} \times \mathcal{X} \to \Real$ be a continuous kernel with respect to a finite Borel measure $\nu$ whose support is $\mathcal{X}$. Then, there is an at most countable sequence $(\lambda_i,\phi_i)_{i\in \mathbb{N}}$, where $\lambda_i \ge 0$ and $\lim_{i\to \infty}\lambda_i = 0$ and $\{\phi_i\}$ forms an an orthonormal basis of $L_{2,\nu}(\mathcal{X})$, such that 
\begin{align*}
    k(x,x') = \sum_{i\in \mathbb{N}}\lambda_i \phi_i(x)\phi_i(x'), \quad x,x'\in \mathcal{X},
\end{align*}
where the convergence is absolute and uniform over $x,x' \in \mathcal{X}$. Further, the RKHS of $k$ is given by 
\begin{align*}
    \mathcal{H} = \left\{f = \sum_{i\in \mathbb{N}} \theta_i \sqrt{\lambda_i}\phi_i : \norm{f}_{\mathcal{H}} := \sum_{i\in \mathbb{N}} \theta_i^2 < \infty  \right\},
\end{align*}
and the inner-product is given by $\inner{f}{g}_{\mathcal{H}} = \sum_{i\in \mathcal{N} } \alpha_i\beta_i$,
for $f=\sum_{i\in\mathcal{N} }\alpha_i\sqrt{\lambda_i}\phi_i $ and $f=\sum_{i\in\mathcal{N} }\beta_i\sqrt{\lambda_i}\phi_i $.
\end{thm}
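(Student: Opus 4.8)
The plan is to reduce everything to the spectral theory of the integral operator associated with $k$ and then to identify the RKHS through its canonical feature map. First I would define the integral operator $T_k : L_{2,\nu}(\mathcal{X}) \to L_{2,\nu}(\mathcal{X})$ by $(T_k g)(x) = \int_{\mathcal{X}} k(x,y) g(y)\, d\nu(y)$. Since $\mathcal{X}$ is compact and $k$ is continuous, $k$ is bounded, hence $k \in L_2(\nu\otimes\nu)$ and $T_k$ is Hilbert--Schmidt, therefore compact; symmetry of $k$ makes $T_k$ self-adjoint, and positive definiteness of the kernel makes $T_k$ positive semidefinite. The spectral theorem for compact self-adjoint operators then yields an at most countable family of eigenvalues $\lambda_1 \ge \lambda_2 \ge \cdots \ge 0$ with $\lambda_i \to 0$ and a corresponding orthonormal system of eigenfunctions; appending an orthonormal basis of $\ker T_k$ (eigenvalue $0$) completes $\{\phi_i\}$ to an orthonormal basis of $L_{2,\nu}(\mathcal{X})$. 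Each eigenfunction with $\lambda_i > 0$ can be taken continuous because $\phi_i(x) = \lambda_i^{-1}\int_{\mathcal{X}} k(x,y)\phi_i(y)\,d\nu(y)$ and $k$ is uniformly continuous on the compact product space.

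Second, I would establish the absolute and uniform convergence of the eigen-expansion, which is the genuine content of Mercer's theorem. Writing $k_n(x,x') = \sum_{i=1}^{n}\lambda_i\phi_i(x)\phi_i(x')$, the remainder $k - k_n$ is again a positive semidefinite kernel, so its diagonal is nonnegative, giving $\sum_{i=1}^{n}\lambda_i\phi_i(x)^2 \le k(x,x)$ for every $n$. Hence the diagonal partial sums form an increasing sequence of continuous functions, bounded by the continuous function $k(x,x)$; using the trace identity $\sum_i \lambda_i = \int_{\mathcal{X}} k(x,x)\,d\nu(x)$ together with this pointwise bound, one shows the limit equals $k(x,x)$ exactly ($\nu$-a.e., then everywhere on $\mathrm{supp}\,\nu = \mathcal{X}$ by continuity). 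Dini's theorem on the compact space $\mathcal{X}$ then upgrades this to uniform convergence of the diagonal series, and the off-diagonal case follows from Cauchy--Schwarz,
\[
\Bigl|\sum_{i=m}^{n}\lambda_i\phi_i(x)\phi_i(x')\Bigr| \le \Bigl(\sum_{i=m}^{n}\lambda_i\phi_i(x)^2\Bigr)^{1/2}\Bigl(\sum_{i=m}^{n}\lambda_i\phi_i(x')^2\Bigr)^{1/2},
\]
so the uniform Cauchy property on the diagonal forces absolute and uniform convergence of $k_n(x,x')$ to $k(x,x')$ over all of $\mathcal{X}\times\mathcal{X}$.

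Third, for the RKHS description I would take $\mathcal{H}$ to be the space defined by the stated series and verify the two defining properties of a reproducing kernel Hilbert space directly. Fixing $x$, the Mercer expansion shows $k(\cdot,x) = \sum_i \bigl(\sqrt{\lambda_i}\phi_i(x)\bigr)\sqrt{\lambda_i}\phi_i$, so in the representation $f = \sum_i \theta_i\sqrt{\lambda_i}\phi_i$ the function $k(\cdot,x)$ has coefficients $\theta_i = \sqrt{\lambda_i}\phi_i(x)$ and hence $\norm{k(\cdot,x)}_{\mathcal{H}}^2 = \sum_i \lambda_i\phi_i(x)^2 = k(x,x) < \infty$, so $k(\cdot,x)\in\mathcal{H}$. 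For any $f = \sum_i \theta_i\sqrt{\lambda_i}\phi_i \in \mathcal{H}$ the stated inner product yields $\inner{f}{k(\cdot,x)}_{\mathcal{H}} = \sum_i \theta_i\sqrt{\lambda_i}\phi_i(x) = f(x)$, which is exactly the reproducing property. By the Moore--Aronszajn uniqueness of the RKHS attached to a given kernel, this $\mathcal{H}$ coincides with $\mathcal{H}_k(\mathcal{X})$, and the displayed norm and inner product are precisely the RKHS ones.

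The hardest step will be the second one, specifically proving that the increasing diagonal sums converge to $k(x,x)$ rather than to something strictly smaller, since only then does Dini's theorem apply. This is where positivity of the residual kernels must be combined carefully with completeness of the eigensystem (equivalently, with control of the trace of $T_k$, which is finite here because $k(x,x)\le 1$ and $\nu$ is finite). Once uniform convergence on the diagonal is secured, the remaining convergence estimates are routine and the RKHS identification reduces to bookkeeping with the feature map.
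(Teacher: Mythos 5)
A preliminary remark on the comparison itself: the paper never proves this statement. It is imported as a black box (adapted from Theorems 4.1 and 4.2 of Kanagawa et al.) solely to justify the explicit feature map $\varphi$, so there is no in-paper argument to match yours against. Your proposal is the classical proof of Mercer's theorem plus the Moore--Aronszajn identification, which is indeed the argument behind the cited result. Your steps one and three are sound modulo routine checks: in step three, before invoking Moore--Aronszajn uniqueness you should verify that the displayed $\mathcal{H}$ is complete and has bounded point evaluations (both follow from $\sum_i \lambda_i\phi_i(x)^2 = k(x,x)$ via Cauchy--Schwarz), and note that positive semidefiniteness of the residual $k - k_n$ \emph{pointwise}, not merely as an operator, itself requires the continuity of the residual together with $\mathrm{supp}\,\nu = \mathcal{X}$.

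The genuine gap is in your mechanism for the step you correctly single out as hardest: showing $\sum_i \lambda_i\phi_i(x)^2 = k(x,x)$ \emph{everywhere}, which is what Dini's theorem needs. Two problems. (i) The trace identity $\sum_i \lambda_i = \int_{\mathcal{X}} k(x,x)\,d\nu(x)$ is, for continuous positive semidefinite kernels, normally a \emph{corollary} of Mercer's theorem; from residual positivity you only obtain, by integrating the diagonal bound, the inequality $\sum_i \lambda_i \le \int k(x,x)\,d\nu$. Assuming equality as an input is circular unless proved independently, and it cannot be quoted as a generic operator-theoretic fact (for non-positive continuous trace-class kernels the integral of the diagonal need not equal the trace). (ii) Even granting the trace identity, your upgrade from $\nu$-a.e.\ equality to equality everywhere ``by continuity'' fails: the limit $g(x)=\sum_i\lambda_i\phi_i(x)^2$ is an increasing limit of continuous functions, hence only lower semicontinuous, so $k(x,x)-g(x)$ is upper semicontinuous and nonnegative, and such a function can vanish on a dense full-measure set without vanishing identically. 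The standard repair bypasses both issues: fix $x$; since $\sum_i \lambda_i\phi_i(x)^2 \le k(x,x) < \infty$ and $\sum_{i=m}^n \lambda_i\phi_i(x')^2 \le k(x',x') \le 1$, Cauchy--Schwarz shows $\sum_i \lambda_i\phi_i(x)\phi_i(x')$ converges uniformly in $x'$, hence to a continuous function of $x'$. Every $h\in\ker T_k$ satisfies $\inner{k(x,\cdot)}{h}_{L_{2,\nu}} = (T_k h)(x) = 0$ (here $T_k h$ is continuous and vanishes $\nu$-a.e., hence everywhere by full support), so the series equals $k(x,\cdot)$ in $L_{2,\nu}$; continuity of both sides and $\mathrm{supp}\,\nu=\mathcal{X}$ then give equality for \emph{every} $x'$, in particular $x'=x$. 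With everywhere-pointwise diagonal convergence in hand, your Dini argument and off-diagonal Cauchy--Schwarz estimate go through unchanged.
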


Based on this result, we can explicitly define a feature map as $\varphi(x) = (\varphi_1(x),\varphi_2(x), \ldots)$ where $\varphi_i = \sqrt{\lambda_i}\phi_i$. Given a $\theta = (\theta_1,\theta_2,\ldots)$, we denote $\theta^T\varphi(x) = \inner{\theta}{\varphi(x)}$ for $\sum_{i \in \mathcal{N} } \theta_i \varphi_i(x)$ similar to the finite case. Therefore, we have $k(x,y) = \varphi(x)^T \varphi(y)$, and more importantly, $f(x) = \theta^T \varphi(x)$ for $f = \sum_{i\in \mathcal{N}} \theta_i \varphi_i$ and $\norm{f}_{\mathcal{H}}^2 = \norm{\theta}^2 := \sum_{i\in\mathcal{N} } \theta_i^2$. From this, one can easily see that the linear bandit in the finite dimension is just a special case of this general formulation.

We will also introduce some basic notations by using the feature map. For any $f_s \in \mathcal{H}$, we let $\theta_s$ denotes its corresponding parameter such that $\norm{f}_{\mathcal{H}} = \norm{\theta_s}_2$.
Given a set $\{x_1,x_2,\ldots, x_t\} \in \mathcal{X}$, we define a $t \times \infty$ matrix\footnote{Note that, in most cases of our proof, we can safely use linear algebra for the infinite matrices that come up, since all of them are compact operators (and hence a well-established spectral theory). Some are even Hilbert-Schmidt or trace-class operators, which can be arbitrarily approximated by sufficient `large' matrices, see~\cite{miller2019introduction}.} $\Phi_t$ such that $\Phi_t^T = (\varphi(x_1), \varphi(x_2), \ldots, \varphi(x_t))$. We also introduce the (infinite) design matrix $V_t = \Phi_t^T \Phi_t + \lambda I$  and noise vector $N_t = (\eta_1,\ldots, \eta_t)$. For a positive definite matrix $V$, We also define the inner product $\inner{\cdot}{\cdot}_V : =\inner{\cdot}{V\cdot}$ with the corresponding norm as $\norm{\cdot}_V$.

\subsection{Proof of Theorem~\ref{thm:R}}
In this section, we present the proof for Theorem~\ref{thm:R}. Comparing this proof with the linear bandit case, one can gain more insight on the connections between them. 

We will first establish the following bound on the estimate $\mu_t$ in terms of $\sigma_t$ under R-GP-UCB.

\begin{lemma}
\label{lem:R_est}
Let $t_0$ be the most recent restart time before a given time $t \ge t_0$, then for any $\delta \in (0,1)$, with probability at least $1-\delta$, the following holds for any $x\in \mathcal{X}$ and any $t\ge t_0$,
\begin{align*}
    |\mu_{t-1}(x) - f_t(x)| \le &\frac{1}{\lambda}\sqrt{H 2(1+\lambda)\gamma_H}\sum_{s=t_0}^{t-1} \norm{f_s -f_{s+1}}_{\mathcal{H}} \\
    &+ \beta_t \sigma_{t-1}(x),
\end{align*}
where $\beta_t = \left(B + \frac{1}{\sqrt{\lambda}}R\sqrt{2\gamma_{t-t_0} + 2\ln(1/\delta)}  \right)$.
\end{lemma}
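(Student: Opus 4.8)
\emph{Strategy.} The plan is to carry the whole argument out in the Mercer feature space, where the kernel estimate becomes an ordinary (infinite-dimensional) ridge regression. Writing $\hat\theta_{t-1} = V_{t-1}^{-1}\Phi_{t-1}^{T} Y_{t-1}$ for the ridge estimator built from the samples of the current block $s=t_0,\dots,t-1$, I will use the two dictionary identities $\mu_{t-1}(x) = \varphi(x)^{T}\hat\theta_{t-1}$ and $\sigma_{t-1}^{2}(x) = \lambda\,\norm{\varphi(x)}_{V_{t-1}^{-1}}^{2}$, the second following from the push-through (Woodbury) identity relating $V_{t-1}^{-1}$ to $(K_{t-1}+\lambda I)^{-1}$. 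The key device is to introduce the \emph{pseudo-observation} vector that would have arisen had every sample in the block come from the single target $f_t$, namely $\widetilde Y = \Phi_{t-1}\theta_t + N_{t-1}$, and to split $\mu_{t-1}(x) - f_t(x)$ into a stationary estimation error $\widetilde\mu(x)-f_t(x)$, with $\widetilde\mu(x)=\varphi(x)^{T}V_{t-1}^{-1}\Phi_{t-1}^{T}\widetilde Y$, plus a variation bias $\mu_{t-1}(x)-\widetilde\mu(x)$.

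\emph{Stationary error.} For the first piece, standard ridge algebra gives $\widetilde\mu(x)-f_t(x) = \varphi(x)^{T}\big(-\lambda V_{t-1}^{-1}\theta_t + V_{t-1}^{-1}\Phi_{t-1}^{T} N_{t-1}\big)$. Applying Cauchy--Schwarz in the $V_{t-1}^{-1}$ inner product factors out $\norm{\varphi(x)}_{V_{t-1}^{-1}} = \sigma_{t-1}(x)/\sqrt\lambda$; the regularization term contributes (in $V_{t-1}$-norm) at most $\sqrt\lambda\,\norm{\theta_t}\le\sqrt\lambda\,B$, and the noise term $\norm{\Phi_{t-1}^{T}N_{t-1}}_{V_{t-1}^{-1}}$ is controlled, uniformly over $t$ in the block and with probability $1-\delta$, by the self-normalized martingale (determinant) inequality of Abbasi-Yadkori and Chowdhury--Gopalan adapted to the RKHS, yielding $R\sqrt{2\gamma_{t-t_0}+2\ln(1/\delta)}$ since $\ln\det(I+\lambda^{-1}K_{t-1})\le 2\gamma_{t-t_0}$. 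Collecting these gives exactly the $\beta_t\sigma_{t-1}(x)$ summand.

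\emph{Variation bias.} The second piece is where the budget enters. Here $\mu_{t-1}(x)-\widetilde\mu(x) = \varphi(x)^{T}V_{t-1}^{-1}\Phi_{t-1}^{T} b$ with $b_s = \varphi(x_s)^{T}(\theta_s-\theta_t)$. Using $k(x,x)\le 1$ I bound $|b_s|\le\norm{\theta_s-\theta_t}_{\mathcal H}\le\sum_{r=s}^{t-1}\norm{f_r-f_{r+1}}_{\mathcal H}$ by the triangle inequality, so $\norm{b}_2^{2}\le H\big(\sum_{s=t_0}^{t-1}\norm{f_s-f_{s+1}}_{\mathcal H}\big)^{2}$ because the block holds at most $H$ samples. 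Crucially, the target term carries no factor $\sigma_{t-1}(x)$, so instead of pairing $\varphi(x)$ against $b$ in the $V_{t-1}^{-1}$ norm I will use the Euclidean bound $\norm{\varphi(x)}_2\le1$ and absorb $\Phi_{t-1}^{T}$ through $V_{t-1}^{-1}$; the resulting quadratic form $b^{T}K_{t-1}(K_{t-1}+\lambda I)^{-1}b$ is bounded by its largest eigenvalue, hence by the trace $\Tr\big(K_{t-1}(K_{t-1}+\lambda I)^{-1}\big)=\sum_i\tfrac{\sigma_i}{\sigma_i+\lambda}\le\ln\det(I+\lambda^{-1}K_{t-1})\le 2\gamma_H$. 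Combining this with the operator inequality $V_{t-1}^{-1}\preceq\lambda^{-1}I$ produces the stated $\tfrac1\lambda\sqrt{2H(1+\lambda)\gamma_H}\sum_{s=t_0}^{t-1}\norm{f_s-f_{s+1}}_{\mathcal H}$. Routing the estimate through the trace, rather than through the sharper $K_{t-1}(K_{t-1}+\lambda I)^{-1}\preceq I$, is deliberate: it is what makes $\gamma_H$ appear and lets the result collapse to the linear-bandit bound under $\gamma_H\asymp d\ln H$.

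\emph{Main obstacle.} The delicate points are twofold. First, the feature space is possibly infinite-dimensional, so $V_{t-1}$ and $\Phi_{t-1}$ are operators rather than matrices; I will justify the spectral and push-through manipulations through compactness of the relevant operators, noting that $K_{t-1}$ and $\Phi_{t-1}V_{t-1}^{-1}\Phi_{t-1}^{T}$ are finite $(t-t_0)\times(t-t_0)$ Gram objects, which keeps the eigenvalue and trace arguments elementary. Second, and harder, is producing the information-gain factor in a term that must \emph{not} be proportional to $\sigma_{t-1}(x)$: the self-normalized route is unavailable there, which is precisely what forces the Euclidean-norm/trace estimate above and the correspondingly loose constants. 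A minor but necessary point is that the noise concentration must hold simultaneously for all $t$ inside a given restart block, which the stopping-time (supermartingale) construction underlying the self-normalized inequality supplies.
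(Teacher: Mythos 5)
Your proposal is correct, and while its stationary part coincides with the paper's argument (the same decomposition of $\hat\theta_{t-1}-\theta_t$ into a drift term, a noise term, and a regularization term; the same Cauchy--Schwarz in the $V_{t-1}^{-1}$ inner product; the same RKHS self-normalized inequality plus $\ln\det(\widetilde{V}_{t-1})\le 2\gamma_{t-t_0}$ giving $\beta_t\sigma_{t-1}(x)$), your treatment of the variation bias takes a genuinely different route. The paper performs an Abel summation, rewriting $\sum_{s=t_0}^{t-1}\varphi(x_s)\varphi(x_s)^T(\theta_s-\theta_t)$ as $\sum_{p=t_0}^{t-1}\sum_{s=t_0}^{p}\varphi(x_s)\varphi(x_s)^T(\theta_p-\theta_{p+1})$, and then bounds the operator norm of each partial-sum smoother (its Claim~\ref{clm:R_2}) through the chain $V_{t-1}^{-1}\preceq V_{s-1}^{-1}$, $\norm{\varphi(x_s)}_{V_{s-1}^{-1}}=\sigma_{s-1}(x_s)/\sqrt{\lambda}$, Cauchy--Schwarz over the block, and the \emph{sequential} elliptical-potential bound $\sum_s\sigma_{s-1}^2(x_s)\le 2(1+\lambda)\gamma_H$ from Chowdhury--Gopalan. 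You instead bound the drift vector entrywise, $|b_s|\le\sum_{r=s}^{t-1}\norm{f_r-f_{r+1}}_{\mathcal{H}}$, so that $\norm{b}_2\le\sqrt{H}\sum_{s=t_0}^{t-1}\norm{f_s-f_{s+1}}_{\mathcal{H}}$, and control the smoother by a purely spectral argument: push-through plus the trace bound $\Tr\left(K_{t-1}(K_{t-1}+\lambda I)^{-1}\right)\le\ln\det(I+\lambda^{-1}K_{t-1})\le 2\gamma_H$, using $u/(1+u)\le\ln(1+u)$. This is deterministic---it needs only that $\gamma_t$ is a maximum over arbitrary $t$-point sets, not the sequential potential lemma---and it actually yields the slightly sharper constant $\frac{1}{\lambda}\sqrt{2\lambda H\gamma_H}$ versus the paper's $\frac{1}{\lambda}\sqrt{2(1+\lambda)H\gamma_H}$, so it implies the stated lemma. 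One small algebra slip worth fixing: the exact quadratic form is $b^T(K_{t-1}+\lambda I)^{-1}K_{t-1}(K_{t-1}+\lambda I)^{-1}b$, not $b^T K_{t-1}(K_{t-1}+\lambda I)^{-1}b$; the missing factor of $(K_{t-1}+\lambda I)^{-1}$ is exactly what your invoked inequality $(K_{t-1}+\lambda I)^{-1}\preceq\lambda^{-1}I$ absorbs, so the stated constant still comes out. Both arguments handle the infinite dimension identically, by reducing to the finite Gram matrix via push-through, and your structural observation---that the drift term carries no $\sigma_{t-1}(x)$ factor, making the self-normalized machinery unavailable there---is precisely what forces the paper into its Claim~\ref{clm:R_2} as well.
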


Now, let $x_t^* = \max_{x\in \mathcal{X}}f_t(x)$  and $\xi_{H}:=\frac{1}{\lambda}\sqrt{H 2(1+\lambda)\gamma_H}$. Then, based on Lemma~\ref{lem:R_est}, for any $\delta \in (0,1)$, we have with probability at least $1-\delta$,
\begin{align*}
    &r_t = f_t(x_t^*) - f_t(x_t)\\
    &\lep{a} \mu_{t-1}(x_t^*) + \xi_H\sum_{s=t_0}^{t-1} \norm{f_s -f_{s+1}}_{\mathcal{H}} + \beta_t \sigma_{t-1}(x_t^*) -f_t(x_t)\\
    &\lep{b} \mu_{t-1}(x_t) + \xi_H\sum_{s=t_0}^{t-1} \norm{f_s -f_{s+1}}_{\mathcal{H}} + \beta_t \sigma_{t-1}(x_t) -f_t(x_t)\\
     &\lep{c} 2\xi_H\sum_{s=t_0}^{t-1} \norm{f_s -f_{s+1}}_{\mathcal{H}} + 2\beta_t \sigma_{t-1}(x_t)\\
     &\ep{d}2\xi_H\sum_{s=t_0}^{t-1} \norm{f_s -f_{s+1}}_{\mathcal{H}} + 2\sqrt{\lambda}\beta_t \norm{\varphi(x_t)}_{V_{t-1}^{-1}}\\
     &\lep{e}2\xi_H\sum_{s=t_0}^{t-1} \norm{f_s -f_{s+1}}_{\mathcal{H}} + 2\sqrt{\lambda}\beta(\delta) \norm{\varphi(x_t)}_{V_{t-1}^{-1}}
\end{align*}
where (a) and (c) follow from Lemma~\ref{lem:R_est}; (b) follows from the UCB-type algorithm; (d) holds by Claim~\ref{clm:R_1}, where $V_t = \lambda I + \Phi_t^T \Phi_t$ and $\Phi_t^T = (\varphi(x_{t_0}),  \ldots, \varphi(x_t))$, $t_0$ is the most recent rest time slot before $t$; in (e), $\beta(\delta) = \left(B + \frac{1}{\sqrt{\lambda}}R\sqrt{2\gamma_{H} + 2\ln(1/\delta)}  \right)$.
Thus, we have the following regret bound,
\begin{align*}
    &\mathcal{R}(T) \le 2H^{3/2}P_T\sqrt{\gamma_H} + \sqrt{\lambda}\beta(\delta)\sum_{t=1}^T\norm{\varphi(x_t)}_{V_{t-1}^{-1}}.
\end{align*}
The sum $\sum_{t=1}^T\norm{\varphi(x_t)}_{V_{t}^{-1}}$ needs further analysis. Note that we can divide the time horizon into blocks of size $H$, and reset starts at the beginning of each block.
\begin{align*}
    \sum_{t=1}^T\norm{\varphi(x_t)}_{V_{t}^{-1}} \le \sum_{k=0}^{T/H - 1} \sum_{t=kH+1}^{(k+1)H} \norm{\varphi(x_t)}_{V_{t-1}^{-1}},
\end{align*}
where for all $t \in [kH+1,(k+1)H]$, $t_0 = kH+1$. Thus, for each block, we have by Lemma 4 in~\cite{chowdhury2017kernelized}, 
\begin{align*}
    {\sqrt{\lambda}} \sum_{t=kH+1}^{(k+1)H} \norm{\varphi(x_t)}_{V_{t-1}^{-1}} \le \sqrt{4(H+2)\gamma_H}.
\end{align*}

Putting everything together, yields $\mathcal{R}(T) = O\left( \sqrt{\gamma_H}H^{3/2}P_T + \beta(\delta)T\sqrt{\frac{\gamma_H}{H}}\right).$
% \begin{align*}
    
% \end{align*}

\begin{proof}[Proof of Lemma~\ref{lem:R_est} ]
By a slight abuse of notation, here $k_t(x) = [k(x_{t_0},x),\ldots, k(x_t,x)]^T$ and $K_t=[k(x_u,x_v)]_{u,v\in \{t_0,\ldots,t\} }$. That is, we only use the samples starting from the most recent reset to calculate $\mu_{t}$ and $\sigma_t$. Let $\Phi_t^T = (\varphi(x_{t_0}),\ldots, \varphi(x_t))$, then the mean estimate $\mu_t(x)$ can be rewritten as
\begin{align*}
    \mu_{t}(x) &= k_t(x)^T(K_t + \lambda I)^{-1}Y_t\\
    &=[\Phi_t \varphi(x)]^T(\Phi_t\Phi_t^T + \lambda)^{-1}Y_t\\
    &=\inner{\varphi(x)}{(\Phi^T\Phi_t + \lambda I)^{-1}\Phi_t^T Y_t  },
\end{align*}
where the last equality we have used the fact $(A^TA+\lambda I)^{-1}A^T = A^T(AA^T + \lambda I)^{-1}$, which holds by Lemma 3 of~\cite{chowdhury2019bayesian}.
Recall that $f_t(x) = \inner{\theta_t}{\varphi(x)}$ where $\theta_t$ is the parameters of $f_t$. Combining this with the result above, yields $|f_t(x) - \mu_{t-1}(x)| = |\inner{\varphi(x)}{\theta_t - \hat{\theta}_{t-1}}|$,
where $\hat{\theta}_{t-1} = V_{t-1}^{-1}\Phi_{t-1}^T Y_{t-1}$.  Note that, the term $\hat{\theta}_{t-1} - \theta_t$ can be rewritten as
\begin{align*}
    &\hat{\theta}_{t-1} - \theta_t\\
    % =&V_{t-1}^{-1}\left(\sum_{s=t_0}^{t-1}\varphi(x_s)\varphi(x_s)^T \theta_s + V_{t-1}^{-1}\Phi_{t-1}^T N_{t-1}  \right)\\
    % &- V_{t-1}^{-1}\left(\sum_{s=t_0}^{t-1}\varphi(x_s)\varphi(x_s)^T + \lambda I  \right) \theta_t\\
    =&V_{t-1}^{-1}\left(\sum_{s=t_0}^{t-1}\varphi(x_s)\varphi(x_s)^T(\theta_s -\theta_t)  \right) + V_{t-1}^{-1}\Phi_{t-1}^T N_{t-1} \\
    &- \lambda V_{t-1}^{-1} \theta_{t}.
\end{align*}
Thus, by the triangle inequality, we have  
\begin{align}
    &|f_t(x) - \mu_{t-1}(x)|\nonumber\\
    \le &\left|\inner{\varphi(x)}{\Phi_{t-1}^T N_{t-1}}_{V_{t-1}^{-1}}\right| + \lambda \left|\inner{\varphi(x)}{\theta_t}_{V_{t-1}^{-1}}\right|\label{eq:R_1}\\
    &+ \left|\inner{\varphi(x)}{V_{t-1}^{-1}\left(\sum_{s=t_0}^{t-1}\varphi(x_s)\varphi(x_s)^T(\theta_s -\theta_t)  \right) } \right|.\label{eq:R_2}
\end{align}
From the above, we can see \eqref{eq:R_2} is the additional term due to time-varying environments. We now turn to bound each term, respectively. The second term in~\eqref{eq:R_1} can be easily bounded under the boundedness assumptions.
\begin{align}
    \lambda \left|\inner{\varphi(x)}{\theta_t}_{V_{t-1}^{-1}}\right| &\le \lambda \norm{\varphi(x)}_{V_{t-1}^{-1}}\norm{V_{t-1}^{-1/2} \theta_t }\nonumber\\
    &\le \sqrt{\lambda} B \norm{\varphi(x)}_{V_{t-1}^{-1}}\label{eq:R_Bterm},
\end{align}
where the last inequality follows from $V_{t-1}^{-1} \preceq \lambda^{-1} I$, and $\norm{\theta_t} = \norm{f_t}_{\mathcal{H}} \le B$. For the first term in~\eqref{eq:R_1}, we can bound it by using the RKHS-valued self-normalized inequality (e.g., Lemma 7 in~\cite{chowdhury2019bayesian}). First, let $\widetilde{V}_{t-1} = V_{t-1}/\lambda$, and hence we have\footnote{Note that the main purpose of defining $\widetilde{V}_{t-1}$ here is to make sure that $\det(\widetilde{V}_{t-1})$ is well-defined. This is because $\frac{1}{\lambda} \Phi_t^T \Phi_t$ is a trace-class operator, and hence $\det(I +\frac{1}{\lambda} \Phi_t^T \Phi_t )$ is well defined via Fredholm determinant.}
\begin{align}
    &\left|\inner{\varphi(x)}{\Phi_{t-1}^T N_{t-1}}_{V_{t-1}^{-1}}\right|\nonumber\\
    % \le& \norm{\varphi(x)}_{V_{t-1}^{-1}} \norm{ \Phi_t^T N_{t-1} }_{V_{t-1}^{-1}}\nonumber\\
    =& \frac{1}{\lambda} \norm{\varphi(x)}_{\widetilde{V}_{t-1}^{-1}} \norm{\sum_{s=t_0}^{t-1} \eta_s \varphi(x_s) }_{\widetilde{V}_{t-1}^{-1}}\nonumber\\
    \le&\frac{1}{\lambda} \norm{\varphi(x)}_{\widetilde{V}_{t-1}^{-1}}\sqrt{2R^2\lambda\ln\left(\det(\widetilde{V}_{t-1})^{1/2} /\delta \right)}\label{eq:R_Nterm}.
\end{align}
% To simplify~\eqref{eq:R_Bterm} and~\eqref{eq:R_Nterm}, we will utilize the following claim, which is proved at the end of this proof.

Then, by applying Claim~\ref{clm:R_1} below to \eqref{eq:R_Bterm} and~\eqref{eq:R_Nterm}, we can upper bound~\eqref{eq:R_1} as
\begin{align}
    \eqref{eq:R_1} \le \sigma_{t-1}(x)\left(B + \frac{1}{\sqrt{\lambda}}R\sqrt{2\gamma_{t-t_0} + 2\ln(1/\delta)}  \right)\label{eq:bound_R1}.
\end{align}
\begin{claim}
\label{clm:R_1}
    The following equations hold for all $t \ge t_0$.
    \begin{align*}
        &\lambda \norm{\varphi(x)}_{V_{t}^{-1}}^2 = \sigma_t^2(x)\\
        &\norm{\varphi(x)}_{\widetilde{V}_{t}^{-1}}^2 = \sigma_t^2(x)\\
        &\ln(\det(\widetilde{V}_t))  \le 2\gamma_{t-t_0}\le 2\gamma_H.
    \end{align*}
\end{claim}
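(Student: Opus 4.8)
The plan is to establish the three identities in the order listed, obtaining the second for free from the first and handling the determinant bound separately. First I would rewrite the GP posterior variance in feature space. Using the Mercer feature map we have $k(x,x) = \varphi(x)^T\varphi(x)$, $k_t(x) = \Phi_t\varphi(x)$, and $K_t = \Phi_t\Phi_t^T$, so that
\[
\sigma_t^2(x) = \varphi(x)^T\bigl[I - \Phi_t^T(\Phi_t\Phi_t^T + \lambda I)^{-1}\Phi_t\bigr]\varphi(x).
\]
Hence the first identity $\lambda\norm{\varphi(x)}_{V_t^{-1}}^2 = \sigma_t^2(x)$ reduces to the operator identity $\lambda V_t^{-1} = I - \Phi_t^T(\Phi_t\Phi_t^T + \lambda I)^{-1}\Phi_t$, where $V_t = \lambda I + \Phi_t^T\Phi_t$. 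This is exactly the push-through (Woodbury) identity already used in the proof of Lemma~\ref{lem:R_est} (Lemma 3 of~\cite{chowdhury2019bayesian}): since $\Phi_t^T(\Phi_t\Phi_t^T + \lambda I)^{-1} = V_t^{-1}\Phi_t^T$, we get $\Phi_t^T(K_t+\lambda I)^{-1}\Phi_t = V_t^{-1}\Phi_t^T\Phi_t = V_t^{-1}(V_t - \lambda I) = I - \lambda V_t^{-1}$, which rearranges to the desired form. Sandwiching between $\varphi(x)$ gives the claim. The second identity is then immediate: because $\widetilde{V}_t = V_t/\lambda$, we have $\widetilde{V}_t^{-1} = \lambda V_t^{-1}$, so $\norm{\varphi(x)}_{\widetilde{V}_t^{-1}}^2 = \lambda\norm{\varphi(x)}_{V_t^{-1}}^2 = \sigma_t^2(x)$.

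For the determinant bound I would reduce the infinite (Fredholm) determinant to a finite one via Sylvester's identity $\det(I + AB) = \det(I + BA)$. With $A = \Phi_t^T$ and $B = \lambda^{-1}\Phi_t$ this gives
\[
\det(\widetilde{V}_t) = \det\bigl(I + \lambda^{-1}\Phi_t^T\Phi_t\bigr) = \det\bigl(I + \lambda^{-1}\Phi_t\Phi_t^T\bigr) = \det\bigl(I + \lambda^{-1}K_t\bigr),
\]
where $K_t$ is the finite kernel matrix on the points $\{x_{t_0},\dots,x_t\}$ collected since the last restart. By the definition~\eqref{eq:ig} of the maximum information gain as the maximum of $\frac12\ln\det(I + \lambda^{-1}K_A)$ over sets $A$ of the given cardinality, the finite determinant above is at most $2\gamma_{t-t_0}$; the intermediate bound then passes to $2\gamma_H$ using that the block length never exceeds $H$ together with the monotonicity of $\tau\mapsto\gamma_\tau$. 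Monotonicity itself follows because adding a point to $A$ can only enlarge $\det(I + \lambda^{-1}K_A)$, since $I + \lambda^{-1}K_A \succeq I$ and the determinant of such a matrix dominates that of any of its principal submatrices.

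The only genuine subtlety---the step I would be most careful about---is that all of these manipulations live in the infinite-dimensional feature space, where $V_t$ and $\widetilde{V}_t$ are operators rather than matrices and $\det(\widetilde{V}_t)$ must be read as a Fredholm determinant. This is legitimate because $\Phi_t$ has finite rank: the operator $\Phi_t^T\Phi_t$ acts nontrivially only on $\mathrm{span}\{\varphi(x_{t_0}),\dots,\varphi(x_t)\}$, a subspace of dimension at most $H$. Restricting to this subspace (and noting that both $\widetilde{V}_t$ and the operator in the first identity act as the identity on its orthogonal complement) collapses every identity above to finite-dimensional linear algebra, so Sylvester's and Woodbury's identities apply verbatim and the Fredholm determinant coincides with the ordinary determinant $\det(I + \lambda^{-1}K_t)$. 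This is precisely the finite-rank reduction the paper alludes to in its footnote on trace-class operators, and it is what makes the whole claim a routine, if careful, computation.
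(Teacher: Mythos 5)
Your proposal is correct and follows essentially the same route as the paper: the first two identities come from the same push-through identity $(\Phi_t^T\Phi_t+\lambda I)^{-1}\Phi_t^T = \Phi_t^T(\Phi_t\Phi_t^T + \lambda I)^{-1}$ (Lemma 3 of~\cite{chowdhury2019bayesian}), merely packaged as the operator identity $\lambda V_t^{-1} = I - \Phi_t^T(K_t+\lambda I)^{-1}\Phi_t$ rather than applied directly to $\varphi(x)$, and the determinant bound uses the identical Sylvester-type step $\det(I+\lambda^{-1}\Phi_t^T\Phi_t) = \det(I+\lambda^{-1}K_t)$ followed by the definition of $\gamma$. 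Your extra care---the finite-rank reduction justifying the Fredholm determinant and the Schur-complement argument for monotonicity of $\tau \mapsto \gamma_\tau$---only fills in details the paper leaves implicit (in its footnote and in the bound $\gamma_{t-t_0} \le \gamma_H$), so there is no substantive difference in approach.
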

Now, to bound~\eqref{eq:R_2}, we will follow similar arguments in~\cite{cheung2019learning}. In particular, we have 
\begin{align}
    \eqref{eq:R_2} &\lep{a} \norm{V_{t-1}^{-1}\left(\sum_{s=t_0}^{t-1}\varphi(x_s)\varphi(x_s)^T(\theta_s -\theta_t)  \right)}\nonumber\\
    % &=\norm{V_{t-1}^{-1}\left(\sum_{s=t_0}^{t-1}\varphi(x_s)\varphi(x_s)^T(\sum_{p=s}^{t-1} \theta_p - \theta_{p+1} )  \right)}\nonumber\\
    &=\norm{V_{t-1}^{-1}\left(\sum_{p=t_0}^{t-1}\sum_{s=t_0}^{p}\varphi(x_s)\varphi(x_s)^T( \theta_p - \theta_{p+1} )  \right)}\nonumber\\
    &\le\sum_{p=t_0}^{t-1}\norm{V_{t-1}^{-1} \sum_{s=t_0}^{p}\varphi(x_s)\varphi(x_s)^T( \theta_p - \theta_{p+1} )  }\nonumber\\
    &\lep{b}  \frac{1}{\lambda}\sqrt{H 2(1+\lambda)\gamma_H}\sum_{p=t_0}^{t-1}\norm{f_p - f_{p+1}}_{\mathcal{H}}\label{eq:bound_R2},
\end{align}
where (a) follows from $\norm{\varphi(x)} = \sqrt{k(x,x)} \le 1$; (b) follows from the following claim and the fact that $\norm{\theta_p -\theta_{p+1}} = \norm{f_p -f_{p+1}}_{\mathcal{H}}$.
\begin{claim}
\label{clm:R_2}
    For any $t_0 \le p \le t-1$, the operator norm satisfies
    \begin{align*}
        \norm{V_{t-1}^{-1} \sum_{s=t_0}^{p}\varphi(x_s)\varphi(x_s)^T } \le \frac{1}{\lambda}\sqrt{2H (1+\lambda)\gamma_H}.
    \end{align*}
\end{claim}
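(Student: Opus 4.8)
The plan is to bound the operator norm by splitting the partial sum into rank-one pieces and then letting the maximum information gain absorb the spectral content. First I would observe that $V_{t-1}^{-1}\varphi(x_s)\varphi(x_s)^T$ is a rank-one operator $ab^T$ with $a=V_{t-1}^{-1}\varphi(x_s)$ and $b=\varphi(x_s)$, so its operator norm is $\norm{a}\norm{b}$, and apply the triangle inequality to get $\norm{V_{t-1}^{-1}\sum_{s=t_0}^{p}\varphi(x_s)\varphi(x_s)^T}\le\sum_{s=t_0}^{p}\norm{V_{t-1}^{-1}\varphi(x_s)}\,\norm{\varphi(x_s)}$. Since $\norm{\varphi(x_s)}=\sqrt{k(x_s,x_s)}\le 1$, and since $V_{t-1}\succeq\lambda I$ gives $V_{t-1}^{-2}\preceq\lambda^{-1}V_{t-1}^{-1}$ and hence $\norm{V_{t-1}^{-1}\varphi(x_s)}\le\lambda^{-1/2}\norm{\varphi(x_s)}_{V_{t-1}^{-1}}$, this reduces the claim to controlling $\lambda^{-1/2}\sum_{s=t_0}^{p}\norm{\varphi(x_s)}_{V_{t-1}^{-1}}$.

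Next I would apply Cauchy–Schwarz over the at most $H$ indices in $\{t_0,\dots,p\}$ (note $p-t_0+1\le t-t_0\le H$), and enlarge the range of the resulting squared sum to $t-1$, giving $\sum_{s=t_0}^{p}\norm{\varphi(x_s)}_{V_{t-1}^{-1}}\le\sqrt{H}\big(\sum_{s=t_0}^{t-1}\norm{\varphi(x_s)}_{V_{t-1}^{-1}}^2\big)^{1/2}$. This is where the information gain enters. Because $V_{t-1}\succeq V_{s-1}$ for $s\le t-1$, monotonicity of the inverse gives $\norm{\varphi(x_s)}_{V_{t-1}^{-1}}^2\le\norm{\varphi(x_s)}_{V_{s-1}^{-1}}^2=\lambda^{-1}\sigma_{s-1}^2(x_s)$ by Claim~\ref{clm:R_1}. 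Then the standard information-gain sum bound $\sum_{s=t_0}^{t-1}\ln\!\big(1+\lambda^{-1}\sigma_{s-1}^2(x_s)\big)\le 2\gamma_{t-t_0}\le 2\gamma_H$ (cf.\ Lemma~4 in~\cite{chowdhury2017kernelized}), combined with $\sigma_{s-1}^2(x_s)\le 1$, the elementary inequality $x\le\frac{\lambda^{-1}}{\ln(1+\lambda^{-1})}\ln(1+x)$ for $x\in[0,\lambda^{-1}]$, and $\ln(1+\lambda^{-1})\ge(1+\lambda)^{-1}$, yields $\sum_{s=t_0}^{t-1}\sigma_{s-1}^2(x_s)\le 2(1+\lambda)\gamma_H$, hence $\sum_{s=t_0}^{t-1}\norm{\varphi(x_s)}_{V_{t-1}^{-1}}^2\le\lambda^{-1}2(1+\lambda)\gamma_H$. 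Putting the three bounds together gives $\norm{V_{t-1}^{-1}\sum_{s=t_0}^{p}\varphi(x_s)\varphi(x_s)^T}\le\lambda^{-1/2}\sqrt{H}\sqrt{\lambda^{-1}2(1+\lambda)\gamma_H}=\frac{1}{\lambda}\sqrt{2H(1+\lambda)\gamma_H}$, which is exactly the claim.

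The main obstacle is not any single estimate but justifying the spectral manipulations rigorously in the infinite-dimensional feature space: $V_{t-1}$, $\Phi_{t-1}^T\Phi_{t-1}$ and the rank-one terms are operators on $\ell^2$, so I must verify the rank-one operator-norm identity, the Loewner relation $V_{t-1}^{-2}\preceq\lambda^{-1}V_{t-1}^{-1}$, and the monotonicity $V_{t-1}\succeq V_{s-1}$ in this setting. These all go through because $\Phi_{t-1}^T\Phi_{t-1}$ is a finite-rank (hence trace-class) positive operator and $V_{t-1}=\lambda I+\Phi_{t-1}^T\Phi_{t-1}$ has bounded inverse, exactly as flagged in the paper's footnote. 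A secondary point is to confirm that the information-gain sum bound, usually stated for a posterior built from a fresh prior, applies verbatim to the restarted posterior using only the samples from $t_0$; this is immediate since a restart merely reinitializes the GP, so $x_{t_0},\dots,x_{t-1}$ is a valid length-$(t-t_0)$ sequence and $\gamma_{t-t_0}\le\gamma_H$ by monotonicity of the maximum information gain.

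Finally, I would note that a slightly tighter constant is available by bounding the squared-norm sum directly, $\sum_{s=t_0}^{t-1}\norm{\varphi(x_s)}_{V_{t-1}^{-1}}^2=\Tr\!\big(I-\lambda V_{t-1}^{-1}\big)\le 2\gamma_H$, using $z/(1+z)\le\ln(1+z)$ eigenvalue-wise; this would replace the factor $\sqrt{(1+\lambda)/\lambda}$ by $1$. I keep the route above only because it reproduces the constant as stated in the claim.
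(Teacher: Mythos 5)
Your proof is correct and takes essentially the same route as the paper's: both reduce the operator norm to $\frac{1}{\sqrt{\lambda}}\sum_{s=t_0}^{p}\norm{\varphi(x_s)}_{V_{t-1}^{-1}}$ and then apply the monotonicity $V_{t-1}^{-1}\preceq V_{s-1}^{-1}$, the identity $\lambda\norm{\varphi(x_s)}_{V_{s-1}^{-1}}^2=\sigma_{s-1}^2(x_s)$ from Claim~\ref{clm:R_1}, Cauchy--Schwarz over the at most $H$ indices, and $\sum_s\sigma_{s-1}^2(x_s)\le 2(1+\lambda)\gamma_H$ (which you re-derive from the log-determinant bound, where the paper simply cites Lemma~6 of~\cite{chowdhury2019bayesian}). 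Your only real deviation is the opening step---triangle inequality over rank-one terms with $\norm{ab^T}=\norm{a}\norm{b}$ and $V_{t-1}^{-2}\preceq\lambda^{-1}V_{t-1}^{-1}$, instead of the paper's sup-over-the-unit-ball argument---which is if anything cleaner, since it avoids the paper's implicit identification of the operator norm of the non-self-adjoint operator $V_{t-1}^{-1}\sum_{s}\varphi(x_s)\varphi(x_s)^T$ with its numerical radius $\sup_{\norm{z}=1}|z^TV_{t-1}^{-1}(\sum_s\varphi(x_s)\varphi(x_s)^T)z|$; your concluding trace-identity tightening $\sum_{s=t_0}^{t-1}\norm{\varphi(x_s)}_{V_{t-1}^{-1}}^2=\Tr(I-\lambda V_{t-1}^{-1})\le 2\gamma_H$ is also valid.
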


Combining the bounds in~\eqref{eq:bound_R1} and~\eqref{eq:bound_R2}, yields the final result of Lemma~\ref{lem:R_est}. We are left to present proofs for the claims. The results in the first claim are standard (cf. Appendix in~\cite{chowdhury2019bayesian}). We give a proof for Claim~\ref{clm:R_2}.

\textbf{Proof of Claim~\ref{clm:R_2}.} Previous works (including~\cite{cheung2019learning,russac2019weighted,zhao2020simple}) all bound the LHS of Claim~\ref{clm:R_2} by one, which is not true as shown in~\cite{zhao2021non}. To address this gap, we follow the key idea in~\cite{zhao2021non} with slight modifications in step (c) below to handle the possibly infinite dimension in our case.

Denote the unit ball $\mathcal{B}(1) = \{z | \norm{z}=1 \}$. Then, we have
\begin{align*}
    &\norm{V_{t-1}^{-1} \sum_{s=t_0}^{p}\varphi(x_s)\varphi(x_s)^T } \\
    &=\sup_{z \in \mathcal{B}(1)} \left|z^TV_{t-1}^{-1} \left( \sum_{s=t_0}^{p}\varphi(x_s)\varphi(x_s)^T \right)z \right|\\
    &\lep{a} \norm{z^*}_{V_{t-1}^{-1}}\norm{\left( \sum_{s=t_0}^{p}\varphi(x_s)\varphi(x_s)^T \right)z}_{V_{t-1}^{-1}}\\
    &\le \norm{z^*}_{V_{t-1}^{-1}}\norm{ \sum_{s=t_0}^{p}\varphi(x_s) \norm{\varphi(x_s)} \norm{z}}_{V_{t-1}^{-1}}\\
    &\le \frac{1}{\sqrt{\lambda}} \norm{\sum_{s=t_0}^{p}\varphi(x_s)}_{V_{t-1}^{-1}}\\
    &\lep{b}  \frac{1}{\sqrt{\lambda}} \sum_{s=t_0}^{p} \norm{\varphi(x_s)}_{V_{t-1}^{-1}}\\
    &\lep{c} \frac{1}{\lambda} \sum_{s=t_0}^{p} \sigma_{s-1}(x_s)\\
    &\lep{d} \frac{1}{\lambda} \sqrt{H \sum_{s=t_0}^{p}\sigma_{s-1}^2(x_s) }\\
    &\lep{e} \frac{1}{\lambda}\sqrt{H 2(1+\lambda)\gamma_H}
\end{align*}
where in (a) $z^*$ is the optimizer; (b) holds by $\norm{x}_{V_{t-1}^{-1}} \le \norm{x}/\sqrt{\lambda}$ and $\norm{z} = 1$ and $\norm{\varphi(x_s)}  \le 1$; (c) follows from the fact that $V_{t-1}^{-1} \preceq V_{s-1}^{-1}$ and Claim~\ref{clm:R_1}; (d) follows from Cauchy–Schwarz inequality since $p-t_0$ is at most $H$; (e) holds by $\sum_{s=t_0}^{p}\sigma_{t-1}^2(x_s) \le 2(1+\lambda)\gamma_H$ by Lemma 6 in~\cite{chowdhury2019bayesian}. Note that this holds because $\sigma_{t-1}$ in our case is only updated via data points starting from $t_0$ and the fact that $p-t_0 \le H$.

\textbf{Proof of Claim~\ref{clm:R_1}.} We begin with the first equation. Note that $(\Phi_t^T \Phi_t + \lambda I) \varphi(x) = \Phi_t^T k_t(x) + \lambda \varphi(x)$. Combining this with the fact that $(\Phi_t^T\Phi_t+\lambda I)^{-1}\Phi_t^T = \Phi_t^T(\Phi_t\Phi_t^T + \lambda I)^{-1}$, yields
\begin{align*}
    &\varphi(x)\\
    =& \Phi_t^T(\Phi_t\Phi_t^T + \lambda I)^{-1}k_t(x) + \lambda(\Phi_t^T\Phi_t+\lambda I)^{-1}\varphi(x).
\end{align*}
This directly leads to 
\begin{align*}
    &\lambda \norm{\varphi(x)}_{V_{t}^{-1}}^2\\
    =&\lambda \varphi(x)^T(\Phi_t^T\Phi_t+\lambda I)^{-1}\varphi(x) \\
    = &k(x,x) - k_t(x)^T(K_t+\lambda I)^{-1}k_t(x)\\
    =&\sigma_t^2(x).
\end{align*}
which proves the first equation. The second equation follows directly from the first one. To prove the third equation, first note that
\begin{align*}
    \ln\det(\widetilde{V}_t) &=  \ln\det(I + \frac{1}{\lambda} \Phi_t^T\Phi_t  ) \\
    &=\ln\det( I + \frac{1}{\lambda}\Phi_t\Phi_t^T )\\
    &=\ln\det(I+\frac{1}{\lambda}K_t ).
\end{align*}
Recall the definition of $\gamma_t$ in~\eqref{eq:ig}, we now have \begin{align*}
    \ln\det(\widetilde{V}_t) \le 2 \gamma_{t-t_0} \le 2\gamma_{H}.
\end{align*}

\end{proof}

\subsection{Proof of Theorem~\ref{thm:S}}
The proof shares great similarity with the proof of Theorem~\ref{thm:R}. However, as pointed out by~\cite{russac2019weighted}, due to the sliding window feature, one cannot directly apply the standard self-normalized inequality. We adopt the same trick proposed in~\cite{russac2019weighted} to handle this. 
The challenge here is to handle the possibly infinite dimension. To this end, we will apply Fatou's lemma.
We first have the following bound. Due to the information loss by sliding window, there is a $T$ factor in the $\beta$ term.
\begin{lemma}
\label{lem:S_est}
For any $\delta \in (0,1)$, with probability at least $1-\delta$, the following holds for any $x\in \mathcal{X}$ and any $t\ge 1$,
\begin{align*}
    |\mu_{t-1}(x) - f_t(x)| \le \sum_{s=t_0}^{t-1} \norm{f_s -f_{s+1}}_{\mathcal{H}} + \beta_t \sigma_{t-1}(x),
\end{align*}
where $\beta_t = \left(B + \frac{1}{\sqrt{\lambda}}R\sqrt{2\gamma_{t \wedge w} + 2\ln(T/\delta)}  \right)$.
\end{lemma}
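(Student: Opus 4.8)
The plan is to mirror the proof of Lemma~\ref{lem:R_est} almost line-by-line, decomposing $\mu_{t-1}(x)-f_t(x)$ into the same three pieces, and then to handle the single place where the sliding-window structure breaks the argument. Writing $\Phi_{t-1}^T=(\varphi(x_{t_0}),\ldots,\varphi(x_{t-1}))$ with $t_0 = 1\vee(t-w)$, and $V_{t-1}=\lambda I + \Phi_{t-1}^T\Phi_{t-1}$, I would first re-derive the identity $\hat\theta_{t-1}-\theta_t = V_{t-1}^{-1}\big(\sum_{s=t_0}^{t-1}\varphi(x_s)\varphi(x_s)^T(\theta_s-\theta_t)\big) + V_{t-1}^{-1}\Phi_{t-1}^T N_{t-1} - \lambda V_{t-1}^{-1}\theta_t$, so that $|f_t(x)-\mu_{t-1}(x)|$ splits into a noise term, a regularization term $\lambda|\inner{\varphi(x)}{\theta_t}_{V_{t-1}^{-1}}|$, and the time-variation term~\eqref{eq:R_2}. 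The regularization term is bounded by $\sqrt\lambda B\,\sigma_{t-1}(x)$ exactly as in~\eqref{eq:R_Bterm} using Claim~\ref{clm:R_1}, whose three identities carry over verbatim (with $t-t_0\le w$ in place of $\le H$). The time-variation term is bounded by the same telescoping/Claim~\ref{clm:R_2} argument; here the window length is $w$, so the factor becomes $\frac{1}{\lambda}\sqrt{w\,2(1+\lambda)\gamma_w}$. Since the stated lemma has a clean leading coefficient of $1$ rather than $\xi_w$, I would simply absorb this $w$-dependent constant into the statement's implied ordering (or note that the displayed bound suppresses it), keeping the structure identical.

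The one genuinely new difficulty is the noise term $|\inner{\varphi(x)}{\Phi_{t-1}^T N_{t-1}}_{V_{t-1}^{-1}}|$. As the introductory sentence of the section flags, the standard self-normalized (RKHS-valued) inequality cannot be applied directly, because the sliding window discards past observations and the matrix $V_{t-1}$ is built only from the most recent $w$ points, so the martingale filtration that underlies Lemma~7 of~\cite{chowdhury2019bayesian} no longer lines up with the summation range. The hard part will be constructing the correct self-normalized bound for a \emph{windowed} sum $\sum_{s=t_0}^{t-1}\eta_s\varphi(x_s)$. The plan is to follow the device of~\cite{russac2019weighted}: introduce the full (non-windowed) design matrix and apply the self-normalized concentration on the entire history, then control the mismatch between the windowed and full design matrices. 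Because the feature map is possibly infinite-dimensional, I cannot pass to a determinant identity in finite coordinates; instead I would invoke Fatou's lemma (as announced) to justify taking the relevant supermartingale/expectation limit through the infinite-dimensional feature expansion, reducing to finite truncations of the Mercer series and letting the truncation level grow.

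Concretely, after establishing the windowed self-normalized inequality, the noise term is bounded by $\frac{1}{\sqrt\lambda}\sigma_{t-1}(x)\,R\sqrt{2\gamma_{t\wedge w}+2\ln(T/\delta)}$, where the $\ln(T/\delta)$ (rather than $\ln(1/\delta)$) arises because the windowing trick forces a union bound over the $T$ time steps to make the confidence set valid simultaneously — this is precisely the information-loss penalty the remark in the paper attributes to the sliding window. Combining the three bounds via the triangle inequality, with $\beta_t=B+\frac{1}{\sqrt\lambda}R\sqrt{2\gamma_{t\wedge w}+2\ln(T/\delta)}$ collecting the regularization and noise contributions into the $\sigma_{t-1}(x)$ coefficient, and the time-variation term supplying $\sum_{s=t_0}^{t-1}\norm{f_s-f_{s+1}}_{\mathcal{H}}$, yields the claimed inequality. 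I expect the Fatou/truncation justification for the infinite-dimensional windowed self-normalized bound to be the main obstacle; everything else is a transcription of the Theorem~\ref{thm:R} proof with $H$ replaced by $w$.
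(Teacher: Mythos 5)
Your skeleton matches the paper's proof in most respects: the decomposition of $\mu_{t-1}(x)-f_t(x)$ into noise, regularization, and time-variation terms carries over from Lemma~\ref{lem:R_est} with $H$ replaced by $w$; the $\ln(T/\delta)$ does come from a union bound over $t\le T$ because the stopping-time trick is unavailable; and Fatou's lemma applied to finite truncations of the feature map is exactly how the paper handles infinite dimension. Your observation about the leading coefficient is also fair: the paper's lemma statement suppresses the factor $\frac{1}{\lambda}\sqrt{2w(1+\lambda)\gamma_w}$ on the variation term, which reappears in Theorem~\ref{thm:S} as the $\sqrt{\gamma_w}\,w^{3/2}P_T$ contribution.

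However, your plan for the one step you yourself flag as the crux --- the windowed self-normalized inequality --- misdescribes the device of \cite{russac2019weighted} and would fail if executed as written. You propose to ``apply the self-normalized concentration on the entire history, then control the mismatch between the windowed and full design matrices.'' Write $S^{\mathrm{win}}_{t-1}=\sum_{s=t_0}^{t-1}\eta_s\varphi(x_s)$ and let $V^{\mathrm{win}}_{t-1}\preceq V^{\mathrm{full}}_{t-1}$ be the windowed and full design operators. The comparison then goes the wrong way, $(V^{\mathrm{win}}_{t-1})^{-1}\succeq (V^{\mathrm{full}}_{t-1})^{-1}$, so a bound on $\norm{S^{\mathrm{full}}_{t-1}}_{(V^{\mathrm{full}}_{t-1})^{-1}}$ gives no control of $\norm{S^{\mathrm{full}}_{t-1}}_{(V^{\mathrm{win}}_{t-1})^{-1}}$; moreover the discarded noise $\sum_{s<t_0}\eta_s\varphi(x_s)$, measured in the $(V^{\mathrm{win}}_{t-1})^{-1}$-norm, is only controlled by $\lambda^{-1/2}$ times its Euclidean norm, which is of order $\sqrt{t-w}$ and would destroy the bound. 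The actual trick, which the paper implements, involves no full/windowed comparison at all: fix the target time $t$ so that the window start $t_0=1\vee(t-w)$ is frozen, and run the method of mixtures on the \emph{fresh} martingale $\widehat{S}_{u-1}=\sum_{s=t_0}^{u-1}\eta_s\varphi(x_s)$ indexed by $u$, mixing with an infinite-dimensional Gaussian $Q\sim N(0,I)$ so that $\ex{\widehat{M}_{t-1}(Q)}\le 1$; evaluated at $u=t$ this yields a concentration bound valid for that single $t$ (with the Fatou/truncation limit you anticipated), after which the union bound over $t\le T$ supplies the $\ln(T/\delta)$. In other words, the union bound is needed not to patch a full-history bound but because each $t$ gets its own supermartingale with its own starting point; without this per-$t$ construction your argument has no valid self-normalized inequality to union over.
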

\textbf{Proof.} The key difference compared to Lemma~\ref{lem:R_est} is a new bound on $\norm{\sum_{s=t_0}^{t-1} \eta_s \varphi(x_s) }_{\widetilde{V}_{t-1}^{-1}}$, where $t_0 = \max(1,t-w)$. We denote $S_{t-1} = \sum_{s=t_0}^{t-1} \eta_s \varphi(x_s) $, following the trick in~\cite{russac2019weighted} to handle the information loss due to sliding window, we further define $\widehat{V}_{u-1} = \sum_{s=1 \vee t-w }^{u-1} (1/\lambda) \varphi(x_s)\varphi(x_s)^T  + I$,  $\widehat{S}_{u-1} = \sum_{s=1 \vee t-w }^{u-1}\eta_s \varphi(x_s)$ and $\widehat{M}_{u-1}(q) = \exp(\frac{1}{\sqrt{\lambda}}q^T \widehat{S}_{u-1} - \frac{R^2}{2\lambda} q^T\widehat{V}_{u-1}(0)q)$, where $\widehat{V}_{u-1}(0) = \widehat{V}_{u-1} - I$. Note that by these definitions, $\widehat{S}_{t-1} = S_{t-1}$ and $\widehat{V}_{t-1} = \widetilde{V}_{t-1}$.
We then take an infinite Gaussian random sequence (independent of all other randomness) $Q \sim N(0,I)$, and define $M_{u-1} = \mathbb{E}\left[{\widehat{M}_{t-1}(Q)}\right]$. By standard martingale arguments and sub-Gaussian noise assumption, we have $\ex{M_{t-1}} \le 1$. We now take a finite approximation by using the first $d$ dimension of the feature map. In particular, we denote by $Q_d, M_{u-1,d}$, $\widehat{S}_{u-1,d}$ and $\widehat{V}_{u-1,d}$ truncated versions. Clearly, $ \ex{M_{t-1,d}} \le 1 $ for all $d$. Moreover, we can take the finite-version result for $M_{t-1,d}$ in~\cite{russac2019weighted}. That is, $M_{t-1,d} =\frac{1}{\det(\widetilde{V}_{t-1,d})^{1/2}}\exp(\frac{1}{2R^2\lambda}\norm{S_{t-1,d}}_{\widetilde{V}_{t-1,d}^{-1}}^2 )$. Now, by using Fatou's lemma, we can obtain that 
\begin{align*}
    &\prob{\lim_{d \to \infty} \frac{ { \norm{S_{t-1,d}}_{\widetilde{V}_{t-1,d}^{-1}}^2 } }{2R^2\lambda \ln(\det(\widetilde{V}_{t-1,d})^{1/2} /\delta) } \ge 1}\\
    \le &\ex{\lim_{d\to \infty}  \frac{\delta\exp(\frac{1}{2R^2\lambda}\norm{S_{t-1,d}}_{\widetilde{V}_{t-1,d}^{-1}}^2 )}{\det(\widetilde{V}_{t-1,d})^{1/2}}} \\
    \le& \delta \lim_{d\to \infty} \ex{M_{t-1,d}} \le \delta.
\end{align*}
Due to the sliding window, we cannot apply the the standard `stopping time' trick. Instead, we use union bound to obtain, for any $t\ge 1$, with probability at least $1-\delta$, ${ \norm{S_{t-1}}_{\widetilde{V}_{t-1}^{-1}}^2 } \le 2R^2\lambda \ln(\det(\widetilde{V}_{t-1,d})^{1/2} T/\delta)$.
\qed

Now, following the same steps as in Theorem~\ref{thm:R}, we obtain
\begin{align*}
    \mathcal{R}(T) \le 2HP_T + \sqrt{\lambda}\beta_T(\delta)\sum_{t=1}^T\norm{\varphi(x_t)}_{V_{t-1}^{-1}},
\end{align*}
where $V_{t-1} = \sum_{s = 1\vee t-w}^{t-1} \varphi(x_s)\varphi(x_s)^T + \lambda I$. Similar to the procedure in the proof of Theorem~\ref{thm:R}, we have $\sum_{t=1}^T\norm{\varphi(x_t)}_{V_{t}^{-1}} \le \sum_{k=0}^{T/w - 1} \sum_{t=kw+1}^{(k+1)w} \norm{\varphi(x_t)}_{V_{t-1}^{-1}}.$
Let $W_{t-1}^{(k)} = \sum_{s = kw+1}^{t-1} \varphi(x_s)\varphi(x_s)^T + \lambda I$ (which is the $V_{t-1}$ in the proof of Theorem~\ref{thm:R}), and hence for $t\in[kw,(k+1)w]$, $V_{t-1}^{-1} \preceq W_{t-1}^{(k)}$. Hence, we can now follow the remaining steps in Theorem~\ref{thm:R} to obtain 
\begin{align*}
    \mathcal{R}(T) = O\left( wP_T + \beta_T(\delta) T\sqrt{\frac{\gamma_w}{w}}\right).
\end{align*}

\section{Conclusion}
% \vspace{-1mm}
We studied the black-box bandit optimization under a time-varying environment. We consider the variation  budget  model, which is able to capture both slowly-changing and abruptly-changing environments. We derived the dynamic regret bounds for R-GP-UCB and SW-GP-UCB with a non-Bayesian regularity assumption.
% We develop the first DP algorithm, \PRL, for episodic LQ control. Through the notion of JDP, we show that it can protect private user information from being inferred by observing the control policy without losing much on its regret performance. We leave as future work private control of non-linear systems \cite{chowdhury2019online}.
% It also has a sub-linear regret with only a small additional cost.

%%%%%%
%% Appendix:
%% If needed a single appendix is created by
%%
%\appendix
%%
%% If several appendices are needed, then the command
%%
% \appendices
%%
%% in combination with further \section-commands can be used.
%%%%%%

%\newpage
%\section*{Acknowledgment}
%\input{appendix}

%%%%%%
%% To balance the columns at the last page of the paper use this
%% command:
%%
% \newpage
% \enlargethispage{-1.2cm} 
%%
%% If the balancing should occur in the middle of the references, use
%% the following trigger:
%%
%\IEEEtriggeratref{3}
%%
%% which triggers a \newpage (i.e., new column) just before the given
%% reference number. Note that you need to adapt this if you modify
%% the paper.  The "triggered" command can be changed if desired:
%%
%\IEEEtriggercmd{\enlargethispage{-20cm}}
%%
%%%%%%

%%%%%%
%% References:
%% We recommend the usage of BibTeX:
%%
%\newpage
\bibliographystyle{IEEEtran}
\bibliography{references}
%%
%% where we here have assume the existence of the files
%% definitions.bib and bibliofile.bib.
%% BibTeX documentation can be obtained at:
%% http://www.ctan.org/tex-archive/biblio/bibtex/contrib/doc/
%%%%%%

% \newpage
% \appendix
% % \input{appendix}

\end{document}